\newtheorem{assumption}{Assumption}
\newtheorem{configuration}{Step-size Schedule}
\newtheorem{example}{Example}
\newtheorem{remark}{Remark}
\newcommand*\tstrut[1]{\vstrut{#1}}
\title{Optimal Algorithms for Stochastic Complementary Composite Minimization}
\author{Alexandre d'Aspremont, Crist\'obal Guzm\'an, Clément Lezane}
\date{\today}
\renewenvironment{equation*}{\[}{\]\ignorespacesafterend}
\begin{document}

\maketitle

\begin{abstract}
Inspired by regularization techniques in statistics and machine learning, we study complementary composite minimization in the stochastic setting. This problem corresponds to the minimization of the sum of a (weakly) smooth function endowed with a stochastic first-order oracle, and a structured uniformly convex (possibly nonsmooth and non-Lipschitz) regularization term. Despite intensive work on closely related settings, prior to our work no complexity bounds for this problem were known. We close this gap by providing novel excess risk bounds, both in expectation and with high probability. Our algorithms are nearly optimal, which we prove via novel lower complexity bounds for this class of problems. We conclude by providing numerical results comparing our methods to the state of the art.
\end{abstract}

\begin{keywords}
Stochastic convex optimization, regularization, non-Euclidean composite minimization, accelerated first-order methods
\end{keywords}

\section{Introduction}

Regularization is one of the most common and successful techniques in stochastic optimization. A regularized objective is given by
\begin{equation} \label{eqn:comp_min}
\begin{aligned}
\min_{x \in \mathcal{X}} \Psi(x) :=[ F(x) +  H(x) ].
\end{aligned}
\end{equation}
Here, ${\cal X}\subseteq \mathbb{R}^d$ is a closed convex set,  $F(x)=\mathbb{E}_z[f(x,z)]$\footnote{In our model, we can consider $F$ endowed with a first-order stochastic oracle, which is strictly more general than a population loss function. The latter representation is only used as a motivation.} represents an expected population loss function, and $H(x)$ is a regularization term that promotes a desired structure for the obtained solution, such as sparsity or having low norm. 

To illustrate more concretely this problem, consider a {\em generalized ridge regression} model studied in \cite{Frank:1993}. This model arises in (random design) linear regression, when applying the maximum likelihood principle under Gaussian output noise and prior parameter distribution given by a density $\propto \prod_j\exp(- |x_j|^q)$, where $1\leq q< \infty$. This family of densities models the geometry of the target predictor. The resulting model is then
\begin{equation} \label{eqn:bridge_reg}
\min_{x\in\mathbb{R}^d} \mathbb{E}_{(a,b)}[(a^{\top} x-b)^2]+\mu\|x\|_q^q.
\end{equation} 
We note this model also arises in sparse risk minimization  \cite{Koltchinskii:2009}, where $q\approx 1$.

Typically, the two functions in \eqref{eqn:comp_min} satisfy complementary properties, such as smoothness for $F$ and strong convexity for $H$. 
Further, in cases such as \eqref{eqn:bridge_reg}, $H(x)=\mu\|x\|_q^q$ is only uniformly convex (when $q\geq 2$) \cite{Ball:1994}.
In this work, we are particularly interested in situations where the underlying norm of the space is non-Euclidean: notice that this norm quantifies the smoothness and strong convexity parameters. Here, it is known that the composite objective \eqref{eqn:comp_min} may not simultaneously enjoy smoothness and strong convexity, or that its condition number may increase polynomially with the dimension\footnote{This limitation is not specific to composite objectives, but to arbitrary functions.} \cite{AGJ18,DG21}. This limitation calls for a more nuanced exploitation of the objective's structure.

The {\em complementary composite minimization} model has been recently proposed to address this limitation \cite{DG21}. Here, deterministic algorithms that combine gradient computations of $F$ with regularized proximal steps on $H$ have been proposed. Interestingly, these algorithms attain accelerated linear convergence rates with an {\em effective condition number} parameter, that is the ratio between the smoothness constant of $F$ with the strong convexity constant of $H$. Our goal in this work is to investigate algorithms for the model \eqref{eqn:comp_min} when $F$ is endowed with a {\em stochastic first-order oracle.}

\subsection{Contributions} Our work initiates the study of complementary composite minimization \eqref{eqn:comp_min} in the stochastic setting. We provide novel algorithms, matching lower complexity bounds, and conclude with numerical experiments to show the benefits of our approach, compared to the state of the art. We remark that our methods are very general, encompassing problems in the form \eqref{eqn:comp_min} where $F$ is convex and weakly smooth, and $H$ is uniformly convex.

\begin{table}[h!]
\centering
\begin{tabular}{|c|c|c|c|} 
 \hline
 Complexity & Initialization cost & Deterministic cost & Stochastic cost \\ [0.5ex] 
 \hline\hline
 NACSMD  & $\frac{L}{\mu} \log\left(\frac{V_{0}}{\epsilon}\right)$ & $\frac{L}{\mu}\left(\frac{L}{\epsilon}\right)^{\frac{q-\kappa}{\kappa}}$ & $\frac{\sigma}{\mu} \left(\frac{\sigma}{\epsilon}\right)^{q-1}$ \\
 ACSMD & $\left(\frac{L}{\mu}\right)^{\frac{1}{q}} \log\left(\frac{ V_{0}}{\epsilon}\right)$ & $\left(\frac{L}{\mu}\right)^{\frac{\kappa}{q\kappa+q-\kappa}} \left(\frac{L}{\epsilon}\right)^{\frac{q-\kappa}{q\kappa-q+\kappa}}$ & $\frac{\sigma}{\mu} \left(\frac{\sigma}{\epsilon}\right)^{q-1}$ \\
 Lower bound & $\sqrt{\frac{L}{\mu}}$ ${(\ast)}$  
 & $\left(\frac{L}{\mu}\right)^{\frac{\kappa}{q\kappa+q-\kappa}} \left(\frac{L}{\epsilon}\right)^{\frac{q-\kappa}{q\kappa-q+\kappa}}$ & $\frac{\sigma}{\mu} \left(\frac{\sigma}{\epsilon}\right)^{q-1}$ \\ 
 \hline
\end{tabular}
\caption{Summary of upper and lower complexity bounds in the paper (up to constant factors that may depend on $q,\kappa$). The complexity is decomposed as the sum of three different terms, where the first two of them also arise in deterministic settings \cite{DG21}. The results applies for $ 1 < \kappa \leq 2 \leq q < \infty $ with $\kappa < q$, except the lower bound $(\ast)$ which is applicable only when $q=2$.}
\label{table:results}
\end{table}

\paragraph{Upper Bounds} Our algorithms are inspired by the literature on stochastic acceleration for first-order methods \cite{Lan11, GL12, GL12II}. We first provide a non-accelerated algorithm, that we call the {\em non-accelerated composite stochastic mirror-descent} (NACSMD), which at every iteration it computes a stochastic gradient of $F$, and uses it to perform a proximal-type step involving the stochastic gradient of $F$ and the non-linearized $H$, which is furthermore localized by the use of a Bregman divergence term. 
Combining this method with a standard restarting scheme, linear convergence (up to a level determined by the noise) is obtained, as shown in the second row of Table \ref{table:results}. Despite this algorithm not being optimal, it is useful to illustrate the main algorithmic building blocks, and it is straightforward to analyze. 

As mentioned above, the non-accelerated algorithm is known to be suboptimal, even in the noiseless case, where $\sigma=0$ \cite{DG21}. Therefore, we propose an accelerated counterpart, 
that we call the {\em accelerated composite stochastic mirror-descent} (ACSMD). As usually in acceleration, the method involves a step similar to the non-accelerated method, which is further enhanced by two auxiliary sequences. One of them provides the sequence of points where the stochastic oracle for $F$ is queried, and the other provides the sequence of points whose objective value attains the accelerated rate. This type of acceleration does not suffice on its own to get linear convergence, and therefore a similar restarting scheme that the one used for the non-accelerated method provides linear convergence: both results can be found in Table \ref{table:results}. Interestingly, this complexity bound improves upon previous upper bounds proved in the deterministic setting (i.e., where $\sigma=0$) \cite{DG21}, showing a more subtle decomposition of the complexity into three terms: a  linearly convergent term (called initialization cost in the table) -- involving the square root of the effective condition number --, a polynomial convergence term -- which in the smooth and strongly convex case vanishes --, and a stochastic term -- involving a signal-to-noise type ratio. Further, our results in the stochastic setting are the first of their kind.

Finally, we remark that our results do not only hold in expectation, but also with high-probability. We achieve this by using concentration inequalities for martingale difference sequences  \cite{Wainwright:2019}. We establish these results 
under moment generating function (mgf) assumptions for the stochastic oracle, where these bounds are adjusted to the uniform convexity of the regularizer. This framework provides a higher flexibility and it is better suited for the noise assumptions used for the in-expectation results. Furthermore, our restarting analysis is done by studying the random deviations of {\em the whole algorithm}, without splitting the concentration analysis among rounds. This is in stark contrast of other restarting algorithms that require explicit shrinking of the optimization domain (e.g., \cite{GL12II}), which is computationally challenging and degrades the probabilistic guarantee proportionally to the number of rounds. Our advances come from the simple observation that one can unravel the complete recursion of the restarted algorithm in a path-wise way, and then establishing concentration in the usual way (with modified weights, due to the restarts).

\paragraph{Lower Bounds} Our accelerated algorithms are nearly optimal in a natural oracle model, where stochastic first-order oracle access to $F$ and full access to $H$ is assumed. This oracle is a stochastic analog of the oracle model introduced in \cite{DG21}. We extend the results of \cite{DG21}, by incorporating the impact of the stochastic oracle into the complexity. Our lower bounds combine those of the deterministic setting \cite{DG21} with an information-theoretic lower bound for stochastic noise, which is based on a Bernoulli oracle. 
This type of argument has been used in stochastic convex optimization in past work \cite{Nesterov83}, and we adapt it to incorporate the uniform convexity of the regularization term.

\paragraph{Numerical Results} We run our restarted (NAC-SMD and AC-SMD) algorithms on generalized ridge regression problems as described in eqn.~\eqref{eqn:bridge_reg}. We have tested our algorithms against the state of the art \cite{GL12,GL12II} on synthetic examples with varying dimension and smoothness parameter. These results do not only confirm the validity of our theoretical advances, but show quantitative improvements upon the state of the art, and some further practical benefits, particularly a more moderate computational overhead when the smoothness parameter is overestimated. We consider this feature important, as estimating this parameter can be difficult in practical scenarios.

\subsection{Related Work}

Stochastic convex optimization is an intensively studied topic, which has been widely used to solve large-scale machine learning problems (see e.g., \cite{Nesterov83,NJLS09,Lan11,GL12,GL12II,Sra:2011,Lan:2020}). Furthermore, the concept of regularization \cite{Tikhonov:1943}, coming from inverse problems and statistics \cite{SSBD:2014,Mohri:2018}, is a well-established and successful model for solving  ill-posed problems with theoretical guarantees. Beyond the classical theory, we emphasize that the use of regularizers that are only uniformly convex (as opposed to strongly convex) has become the focus of various works \cite{Koltchinskii:2009,Combettes:2018,Bubeck:2018,Adil:2019,Adil:2019NeurIPS}. The necessity of this assumption is crucially related to the structure of the Banach spaces where these variational problems are naturally posed.


Previous works on stochastic composite minimization (e.g., \cite{Lan11}) require strong convexity and smoothness of $F$ to attain linear convergence. For the complementary setting, where the strong convexity assumption only holds for the regularizer $H$, results are rare and typically provide upper bounds only in Euclidean settings (see, e.g.~\cite{HKP09}). Furthermore, the approach in \cite{HKP09} is not compatible with the restarting scheme  
algorithm suggested in \cite{GL12II} (called multistage in that paper); note that all existing linearly convergent methods in the stochastic setting use such restarts. 
And even for the optimal performance, the convergence proof presented in the article \cite{GL12II} requires an assumption about the proximal function to be lower bounded and upper bounded by $\|\cdot \|^2$: by contrast, our approach does not need this assumption. 

Although not particularly focused on complementary settings, the work of Juditsky and Nesterov \cite{JuditskyNes:2014} (together with the classical monograph \cite{Nesterov83}) is one of the few that studies uniformly convex objectives in the stochastic setting. The setting of this paper is slightly different from ours: the stochastic objectives considered are nonsmooth and uniformly convex, and the space is endowed with a {\em strongly convex distance generating function}. Although we can adapt our techniques to extend those of \cite{JuditskyNes:2014}, we have omitted these results for brevity. 
Other recent works  focused on weak moment assumptions for the stochastic gradients (possibly with infinite variance) \cite{Vural22}, but the approach was done only for non-smooth optimization and in a non composite setting. In particular, no form of acceleration can be obtained there.

The closest work to ours is that of deterministic complementary composite minimization \cite{DG21}, which establishes the convergence of accelerated dual averaging algorithms in this setting. This work is the main inspiration for both algorithmic design, step-size schedules, as well as the lower bounds. 
We note however that, even in the deterministic settings, our upper bounds are sharper, which we attribute to the great flexibility of our step-size policy and restarting schedule. 
Independently, in \cite{Cohen:2020} composite acceleration under the lens of relative Lipschitzness and relative strong convexity was obtained by application of extragradient-type algorithms. Again, this derivation \cite[Thm.~4]{Cohen:2020} is only made for deterministic objectives.

At the technical level, we have extended the proof in \cite{GL12} to exploit the uniform/strong convexity of the regularizer, and our analysis provides a more flexible choice of step-size parameters. For the accelerated method, we also mix the AGD+ step-size from \cite{DG21}, with the usual ones in \cite{GL12} to create our own sequence of step-sizes, a particular point is that the choice becomes more intuitive and it is not unique anymore. 


Independently and concurrently to our work, Dubios-Taine at al.~\cite{DuboisTaine:2022} studied stochastic composite minimization in a smooth plus strongly convex setting (a particular case of our work). However, their algorithm only obtains constant accuracy under constant noise, as opposed to our vanishing and optimal accuracy bounds. Here as well, it appears that our advantage comes from the flexibility of the step-size schedule.

\section{Preliminaries}



We introduce here several notions which are relevant for our work. In what follows, we let $\overline{\mathbb{R}}:=\mathbb{R}\cup\{+\infty\}$. For the algebraic and ordering properties of this space, see e.g., \cite{Beck:2017}.


\begin{definition}[Uniform convexity]
Let $q\geq 2$ and $\mu>0$. A function $f:\mathcal{X} \xrightarrow{} \overline{\mathbb{R}}$ subdifferentiable on its domain is $(\mu,q)$-uniformly-convex w.r.t. a norm $\|\cdot\|$ if 
\[ f(x) - f(y) - \langle g, x-y \rangle \geq \frac{\mu}{q} \|x-y\|^q \quad(\forall x\in {\cal X}, \forall y\in\mbox{\em dom}(f), \exists g\in\partial f(y)).\]
\end{definition}

\begin{example}
In the case of $q=2$, the definition of uniform convexity coincides with the more well-known notion of strong convexity. In that case, it is known that the function $f(x)=\frac{1}{2}\|x\|_p^2$, where $1<p\leq 2$, is $(p-1,2)$-uniformly convex w.r.t.~ $\|\cdot\|_p$. 
Another example, the negative entropy, defined as $f(x)=\sum_{j=1}^d x_j\ln(x_j)$ if $x\in \Delta_d$ (the standard unit simplex in $\mathbb{R}^d$), and $+\infty$ otherwise; is $(1,2)$-uniformly convex w.r.t.~$\|\cdot\|_1$. For these two examples we refer the reader to \cite[Section 5.3.2]{Beck:2017}.

Now let us consider the case of $q>2$. Then, it is possible to show that $f(x)=\|x\|_q^q$ is $(2^{-\frac{q(q-2)}{q-1}}/q,q)$-uniformly convex w.r.t.~$\|\cdot\|_q$. We provide more details in Appendix \ref{uniform convexity example}. All the previous examples can be extended to their spectral counterparts, namely the Schatten spaces $\mbox{Sch}_{p}:= (\mathbb{R}^{d\times d},\|\cdot\|_{Sch,p})$, where given the spectrum of a matrix $X$, $(\sigma_j(X))_{j\in[d]}$, we define its Schatten norm as $\|\cdot\|_{Sch,p}:=\sum_{j}|\sigma_j(X)|^q$(see e.g.~\cite{Ball:1994,Juditsky:2008}). These matrix counterparts arise naturally in linear inverse problems \cite{Recht:2010, NesterovNemirovski:2013}.
\end{example}

In what follows, we   denote by $\nabla f(y)$ any subgradient of a function $f$ at point $y$. This is only for notational convenience, and it can be done without loss of generality.

\begin{definition}[Weak smoothness]
Let $\kappa\in(1,2]$ and $L\geq 0$. A differentiable function $f:\mathcal{X} \xrightarrow{} \mathbb{R}$ is $(L,\kappa)$-weakly smooth w.r.t. a norm $\|\cdot\|$ if 
\[ f(x) - f(y) - \langle \nabla f(y), x-y \rangle \leq \frac{L}{\kappa} \|x-y\|^\kappa \quad(\forall x,y\in {\cal X}).\]
\end{definition}

\begin{definition}[Bregman Divergence]
Let $\omega:\mathcal{X} \xrightarrow{} \mathbb{R}$ be a convex function which is continuously-differentiable on the interior of its domain, we define the Bregman divergence of $\omega$ as 
\[ D^{\omega}(x,y) = \omega(x) - \omega(y) - \langle \nabla \omega(y), x-y \rangle \qquad(\forall x\in {\cal X}, y\in \mbox{\em dom}(\omega)).\]
Note that if $\omega$ is (uniformly) convex, then the Bregman divergence is (uniformly) convex on its first argument.
\end{definition}




The following result is a consequence of the three-points identity \cite{CT93}. We note that \cite{Lan11} follows a similar route, where a negative Bregman divergence term is upper bounded by zero. We maintain this term, as it is crucial for our improved rates.

\begin{lemma}  \label{lem:prox}
Let $f$ be a convex function and $\nu$ be convex and continuously differentiable. If we consider \[ u^{\star} = \arg\min_{u\in {\cal X}} \{f(u) + D^{\nu}(u,y)\}, \]
then for all $u$:
\[f(u^{\star}) + D^{\nu}(u^\star,y) + D^{f}(u,u^\star) \leq f(u) +  D^{\nu}(u,y) - D^{\nu}(u,u^\star). \]
\end{lemma}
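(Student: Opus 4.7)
The plan is to combine the first-order optimality condition at $u^\star$ with the three-points identity for the Bregman divergence $D^\nu$, and use subdifferential convexity of $f$ to produce the $D^f(u,u^\star)$ term on the left-hand side.

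First, I would write the first-order optimality condition for the convex minimization problem defining $u^\star$: there exists $g\in \partial f(u^\star)$ such that for every $u\in\mathcal{X}$,
\[\langle g + \nabla\nu(u^\star) - \nabla\nu(y),\, u - u^\star\rangle \geq 0,\]
that is, $\langle g, u - u^\star\rangle \geq \langle \nabla\nu(y) - \nabla\nu(u^\star),\, u-u^\star\rangle$.

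Next, convexity of $f$ (together with the definition of $D^f$) gives
\[f(u) = f(u^\star) + \langle g, u-u^\star\rangle + D^f(u, u^\star),\]
so, substituting the bound above,
\[f(u) \geq f(u^\star) + \langle \nabla\nu(y) - \nabla\nu(u^\star),\, u-u^\star\rangle + D^f(u, u^\star).\]

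Then I would apply the classical three-points identity for $\nu$, namely
\[D^\nu(u,y) = D^\nu(u, u^\star) + D^\nu(u^\star, y) + \langle \nabla\nu(u^\star) - \nabla\nu(y),\, u - u^\star\rangle,\]
which rewrites $\langle \nabla\nu(y) - \nabla\nu(u^\star),\, u-u^\star\rangle = D^\nu(u,u^\star) + D^\nu(u^\star, y) - D^\nu(u,y)$. Plugging this into the previous inequality and rearranging the Bregman terms to the desired sides yields exactly the claim.

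There is no real obstacle here: the only care needed is a correct sign bookkeeping in the three-points identity and in the subgradient inequality defining $D^f$. In particular, the whole point of the lemma (emphasized in the remark preceding it) is to \emph{retain} the negative $-D^\nu(u,u^\star)$ term on the right, rather than discarding it by nonnegativity as is done in \cite{Lan11}; the derivation above produces exactly that term from the three-points identity without any further work.
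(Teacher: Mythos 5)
Your proof is correct and follows essentially the same route as the paper's: first-order optimality at $u^\star$, the three-points identity for $D^\nu$, and the definition of $D^f$ via a subgradient of $f$ at $u^\star$. The only cosmetic difference is that you explicitly name the subgradient $g\in\partial f(u^\star)$, whereas the paper writes $\nabla f(u^\star)$ under its earlier convention that this denotes an arbitrary subgradient.
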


\begin{proof}

From the first order optimality conditions, for all $u\in{\cal X}$: 

\[ \langle \nabla f(u^\star) + \nabla D^{\nu}(u^{\star},y) , u - u^{\star}\rangle \geq 0,\] with the gradient taken with respect to the first entry. We also apply the three-points identity from \cite{CT93}: 

\[\langle \nabla D^{\nu}(u^{\star},y) , u - u^{\star}\rangle = D^{\nu}(u,y)- D^{\nu}(u,u^{\star})- D^{\nu}(u^{\star},y).\]

Then:
\begin{align*}
  f(u) - f(u^{\star}) - D^{f}(u,u^{\star}) & = \langle \nabla f(u^\star) , u - u^{\star}\rangle \\
    & \geq D^{\nu}(u,u^{\star})- D^{\nu}(u,y)  +D^{\nu}(u^{\star},y).
\end{align*}

\end{proof}

Given parameters $L,\kappa, \mu,q$ we define the following  parameters to simplify notation: 
\begin{equation} \label{eqn:notations_param}
    \begin{aligned}
    \begin{cases}
    r & := \frac{q-\kappa}{\kappa} \geq0 \\
    M & :=   \big( \frac{r}{q} \big)^{r} L \geq 0\\
    p & := \frac{q}{q-1} \in(1,2].
    \end{cases}
    \end{aligned}
\end{equation}

Now we introduce a key lemma, which first arose in a more restricted form in \cite{DGN13} in the context of methods with inexact gradients, and has later been used to bridge uniform convexity and uniform smoothness inequalities in first-order methods as in \cite{Nesterov14}, \cite{AGJ18} and \cite{DG21}. Here, we use a homogeneous version of the lemma. 

\begin{lemma}
\label{Lemma inexact gradient}
From notation \ref{eqn:notations_param}, if $f$ is $(L,\kappa)$-weakly smooth, then for all $\delta>0$ 
\begin{equation}
\begin{aligned}
f(x) - f(y) - \langle \nabla f(y), x-y \rangle \leq \frac{M}{q \delta^r} \|x-y\|^q + L \delta \qquad(\forall x,y\in \mathcal{X}).
\end{aligned}
\end{equation}
\end{lemma}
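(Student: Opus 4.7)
The plan is to derive the inequality by applying Young's inequality to the weak smoothness bound, interpolating between $\|x-y\|^\kappa$ and $\|x-y\|^q$. The deterministic version in \cite{DG21} follows the same template, so the main work is bookkeeping with the specific constants $r$ and $M$.

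First I would dispose of the boundary case $q=\kappa$ (which forces $r=0$, $M=L$, with the convention $0^0=1$): here the weak smoothness assumption already gives $f(x)-f(y)-\langle\nabla f(y),x-y\rangle\leq \frac{L}{q}\|x-y\|^q$, and adding the nonnegative term $L\delta$ yields the claim immediately.

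For $q>\kappa$, I would start from the weak smoothness inequality
\[
f(x)-f(y)-\langle\nabla f(y),x-y\rangle \leq \frac{L}{\kappa}\|x-y\|^\kappa,
\]
and then apply Young's inequality $ab\leq \frac{a^s}{s}+\frac{b^t}{t}$ with the conjugate exponents $s=q/\kappa$ and $t=q/(q-\kappa)$ (which are conjugate since $\kappa/q + (q-\kappa)/q =1$). Writing $\|x-y\|^\kappa = (\lambda\|x-y\|^\kappa)\cdot \lambda^{-1}$ for a free parameter $\lambda>0$ gives
\[
\|x-y\|^\kappa \leq \frac{\kappa}{q}\lambda^{q/\kappa}\|x-y\|^q + \frac{q-\kappa}{q}\lambda^{-q/(q-\kappa)}.
\]
Multiplying by $L/\kappa$ yields the shape $\alpha(\lambda)\|x-y\|^q + \beta(\lambda)$.

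The remaining step is to tune $\lambda$ so the free term equals $L\delta$. Setting $\frac{L(q-\kappa)}{q\kappa}\lambda^{-q/(q-\kappa)}=L\delta$ and using $r=(q-\kappa)/\kappa$ gives $\lambda^{-q/(q-\kappa)}=\frac{q}{r}\delta$, hence $\lambda^{q/\kappa}=(r/(q\delta))^r = (r/q)^r\,\delta^{-r}$, so the coefficient of $\|x-y\|^q$ becomes $\frac{L}{q}(r/q)^r\delta^{-r}=\frac{M}{q\delta^r}$, exactly as required.

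I do not foresee a real obstacle: the argument reduces to choosing the Young exponents $(q/\kappa, q/(q-\kappa))$ and solving one algebraic equation for $\lambda$ in terms of $\delta$. The only subtleties are (i) correctly interpreting the edge case $q=\kappa$ so that $M$ is well-defined, and (ii) keeping track of the exponents $r$ and $p$ introduced in \eqref{eqn:notations_param} to confirm that the constant produced by Young's inequality matches $M/q$ on the nose.
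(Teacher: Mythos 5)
Your proof is correct and follows essentially the same route as the paper: apply Young's inequality with conjugate exponents $(q/\kappa, q/(q-\kappa))$ to $\|x-y\|^\kappa$ with a free scaling parameter, then solve for the parameter so the constant term equals $L\delta$; your $\lambda$ plays the role of the paper's $z=(b\delta)^r$. One small bonus of your write-up is that you explicitly dispose of the degenerate case $q=\kappa$ (where $b=q/(q-\kappa)$ blows up and $r=0$), which the paper's proof implicitly skips.
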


\begin{proof}
For $x,y \in \mathcal{X}$, we know $f(x) \leq f(y) + \langle \nabla f(y), x-y \rangle + \frac{L}{\kappa}\|x-y\|^{\kappa}.$ 
Now use the Young inequality as in \cite{AGJ18}, for $\frac{1}{a} + \frac{1}{b} = 1 $ we have $t \leq \frac{1}{az}t^a + \frac{1}{b}z^{b-1}.$  We consider $t:= \frac{\|x-y\|^\kappa}{\kappa}$, $a=\frac{q}{\kappa}$, $b=\frac{q}{q-\kappa}$, $z= (b\delta)^{\frac{1}{b-1}}= (b\delta)^{r}$, we scale everything with $L$: 
\[ \frac{L}{\kappa}\|x-y\|^{\kappa} \leq  \frac{M}{q \delta^r} \|x-y\|^q + L \delta, \]
where the middle term comes from
\[\frac{\kappa L}{q} \left( \frac{q-\kappa}{q \delta}\right)^{r} \left(\frac{1}{\kappa}\right)^{q/\kappa} = \frac{L}{q \delta^r}\left(\frac{q-\kappa}{q \kappa}\right)^{r} = \frac{M}{q \delta^r}, \]
plugging this bound back in the first step of the proof shows the result. 
\end{proof}

\subsection{The Stochastic Oracle Model}

We are interested in studying problem \eqref{eqn:comp_min} in a natural oracle model, that we will refer to as the {\em stochastic composite oracle model}. We make the following assumptions:
\begin{itemize}
    \item $F:{\cal X}\mapsto\mathbb{R}$ is convex and $(L,\kappa)$-weakly smooth.
    \item $H:{\cal X}\mapsto\overline{\mathbb{R}}$ is $(\mu,q)$-uniformly convex, continuously differentiable, and dom$(H)\neq\emptyset$.
\end{itemize}
Notice that under these assumptions, problem \eqref{eqn:comp_min} has a unique solution, that we will denote by $x^{\star}$.

Now we proceed to specify the oracle assumptions for both functions.
\begin{assumption}\label{assump:gradient_noise}
From the notation introduced in \eqref{eqn:notations_param}, we assume the existence of an oracle that for any given $x\in {\cal X}$ provides a random variable $G(x,\xi)$ such that
\begin{eqnarray}
 \mathbb{E}_{\xi}[G(x,\xi)] &=&\nabla F(x). \label{eqn:unbiased} \\
 \mathbb{E}_{\xi}[\|G(x,\xi)-\nabla F(x)\|_{\ast}^p] &\leq& \sigma^p. \label{eqn:moment} 
\end{eqnarray}
\end{assumption} 
The first equation states that that $G(x,\xi)$ is an unbiased estimator of the gradient $\nabla F(x)$. On the other hand, the second equation controls the $p$-th moment of the noise of this oracle. Notice that by the Jensen inequality this assumption is more restrictive for higher values of $p$.


Our algorithms will be based on the  mirror-descent method. For this, we will use the regularizer $H$ as our distance-generating function (dgf) which is continuously differentiable on the interior of its domain and is $(1,q)$-uniformly convex. 
We introduce the standard assumption on the computability of the prox-mapping for the dgf \cite{NJLS09}.
\begin{assumption}\label{assump:subproblem}
We assume that for any linear function $g\in\mathbb{R}^d$, the problem below can be solved efficiently,
\begin{equation} \label{eqn:prox_step}
  \min_{x\in {\cal X}} [\langle g,x\rangle+H(x)]. 
\end{equation}
\end{assumption}
Notice also that Assumption \ref{assump:subproblem} implies the computability of subproblems involving the Bregman divergence, $\min_x[\langle g,x\rangle+H(x)+D^{H}(x,y)],$ for any $g,y\in \mathbb{R}^d$.

For convenience, we introduce the {\em gradient noise} random variable, 
\[\Delta(x) := G(x, \xi) - \nabla F(x).\]
To derive high probability accuracy bounds, we will use Bernstein-type concentration inequalities \cite{Wainwright:2019}, with adaptations regarding the exponent $p$. The usual assumption in the literature relates to a sub-Gaussian tail bound on the norm of the stochastic oracle error (see, e.g. \cite{NJLS09,Lan11,GL12,GL12II}). However, weaker moment bounds such as Assumption \eqref{assump:gradient_noise} with $1<p<2$ are inconsistent with sub-Gaussian tails. 

\begin{assumption} 
\label{assumption:mgf_surrogate}
We assume that given a sample $\xi$, for all $x\in {\cal X}$ 
\begin{equation}\label{eqn:exp_mgf}
\mathbb{E}\Big[\exp\Big\{\frac{\|\Delta(x) \|_{\ast}^p}{\sigma^p} \Big\}\Big] \leq 2.
\end{equation}
\end{assumption} 

We notice that this assumption implies the bound \eqref{eqn:moment} in Assumption \ref{assump:gradient_noise}: 
\[ \mathbb{E} \Big[\frac{\| \Delta(x)\|_{\ast}^p}{\sigma^p} \Big] \leq \mathbb{E} \Big[ \exp \Big( \frac{\| \Delta(x)\|_{\ast}^p}{\sigma^p} \Big)-1 \Big] \leq 1 \quad\implies \quad\mathbb{E} \Big[\| \Delta(x)\|_{\ast}^p \Big] \leq \sigma^p. \] 
This assumption gives also an upper bound for inner products of the gradient noise, which is straightforward from the H\"older inequality, thus we omit its proof.
\begin{corollary} 
Suppose that ${\cal X}\subseteq {\cal B}_{\|\cdot\|}(x_{\star},R)$, for some $R>0$. Then, under Assumption \ref{assumption:mgf_surrogate}, if we let $W := \langle \Delta(x),x_{\star}-x \rangle$, then:
\begin{equation} \label{eqn:mgf_real} \mathbb{E}\Big[\exp\Big\{\frac{|W|^p}{\sigma^p R^p} \Big\}\Big] \leq 2. \end{equation}
\end{corollary}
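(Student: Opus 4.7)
The plan is to reduce the bound on the exponential moment of $|W|$ to the assumed bound on $\|\Delta(x)\|_\ast$ by a direct application of the generalized Hölder (i.e., norm/dual-norm) inequality, followed by monotonicity of the exponential.

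First, I would bound $|W|$ pathwise. By the definition of the dual norm,
\begin{equation*}
|W| = |\langle \Delta(x), x_\star - x\rangle| \leq \|\Delta(x)\|_\ast \, \|x_\star - x\|.
\end{equation*}
Since $x\in {\cal X}\subseteq {\cal B}_{\|\cdot\|}(x_\star,R)$, we have $\|x_\star - x\| \leq R$, and hence
\begin{equation*}
|W|^p \leq R^p \, \|\Delta(x)\|_\ast^p,
\end{equation*}
so that $|W|^p / (\sigma^p R^p) \leq \|\Delta(x)\|_\ast^p / \sigma^p$ pointwise in $\xi$.

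Next, I would invoke monotonicity of $t\mapsto \exp(t)$ on the above inequality and take expectations, which yields
\begin{equation*}
\mathbb{E}\!\left[\exp\!\Big\{\tfrac{|W|^p}{\sigma^p R^p}\Big\}\right] \leq \mathbb{E}\!\left[\exp\!\Big\{\tfrac{\|\Delta(x)\|_\ast^p}{\sigma^p}\Big\}\right].
\end{equation*}
Applying Assumption \ref{assumption:mgf_surrogate} to the right-hand side gives the claimed bound of $2$, concluding the proof.

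There is no genuine obstacle here: the statement is a one-line consequence of the duality pairing inequality combined with the diameter hypothesis, and the assumed tail bound on $\|\Delta(x)\|_\ast$ transfers verbatim through monotonicity of the exponential. The only thing worth flagging is that the qualifier ``$\forall t\geq 1$'' appearing in Assumption \ref{assumption:mgf_surrogate} is not used (it appears to be a vestigial quantifier in the assumption), so no additional scaling argument is required.
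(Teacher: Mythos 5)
Your proof is correct and matches the paper's intent exactly: the paper explicitly states that this corollary ``is straightforward from the H\"older inequality'' and omits the proof, and your argument (dual-norm bound on the pairing, the radius-$R$ diameter hypothesis, monotonicity of $\exp$, then Assumption~\ref{assumption:mgf_surrogate}) is precisely that omitted one-liner. Your side remark that the ``$\forall t\geq 1$'' quantifier in Assumption~\ref{assumption:mgf_surrogate} is vestigial is also accurate, since $t$ does not appear in the displayed bound.
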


In Appendix \ref{app:concentration} we derive the necessary concentration inequalities for these random variables, as well as their respective martingales. Although these results are not entirely new (see e.g., \cite{Buldygin:2000,Zajkowski:2020}), we include these analyses for completeness, and since they are not common in the optimization community. Moreover, our derivations work directly on the moment generating functions, avoiding the smoothing (also called ``standardization'') approaches carried out in the aforementioned works.

\subsection{Restarting scheme}
Finally, regarding linear convergence rates, 
there is a key technique of {\em restarting an algorithm} multiple times to reduce the initialization error exponentially fast. In our context, the idea was introduced 
in \cite{GL12II} with the restarting procedure occurring at every $2^n$-th iteration. In that reference, the authors have also used the assumption of an existing strongly convex proximal function (distance generating function), which is not needed in our algorithm.

Here we will suggest a simpler analysis of the restarting algorithm for faster convergence in expectation. Instead of restarting every $2^n$ iterations, we will simply restart the algorithm periodically. 


\begin{algorithm}[H]
\caption{Restarting Algorithm}
\label{Restarting Algorithm}
\begin{algorithmic}
\Require First stage iteration number $n,K \geq 0$, Second stage iteration number $T \geq 0$, Starting point $x_1 \in \mathcal{X}$, algorithm $\mathcal{A}$
\State Consider initial start point: $x_{1}^{0}= x_1$
\For{$ 1 \leq k \leq n$}
\State Run Algorithm $\mathcal{A}$ with $K$ iterations, obtain $(x_{K+1}^{k},y_{K+1}^{k} )$ as output. 
\State Set the new restarting point $x_{1}^{k+1} \xleftarrow{} x_{K+1}^{k}$.
\EndFor
\State Run algorithm $\mathcal{A}$ with $T$ iterations using the last  starting point $x_{1}^{n+1} = x_{K+1}^{n}$, obtain $(X_{T+1},Y_{T+1})$ and output.
\end{algorithmic}
\end{algorithm}

The general idea of the restarting scheme is to fully use the recursive form of the Bregman divergence term which appear on both sides of the accuracy guarantee (we will show that our algorithms have this feature in the next section). Leveraging that implicit distance guarantee, the algorithm can exponentially boost its convergence rate by only increasing its other polynomially convergent terms by an absolute constant factor (namely, 3). The following lemma, whose proof is deferred to Appendix \ref{annexes:proof restart}, illustrates this.

\begin{lemma}
\label{lemma restarting}
Consider $(x_{T+1},x_{T+1}^{ag})$ the output of an algorithm. If there exist $K_1,K_2,K_3,K_4\geq 0$ and  $\alpha_1 \geq \alpha_2,\alpha_3,\alpha_4$ such that we know for all $T$:
\begin{multline*}
\Psi(x_{T+1}^{ag}) -\Psi^{\star} + D^{H}(x_{\star},x_{T+1}) \leq \textstyle \frac{K_1 D^{H}(x_{\star},x_1)}{T^{\alpha_1}} 
+ \frac{K_2}{T^{\alpha_2}} + \frac{K_3}{T^{\alpha_3}} \sum_{t=1}^T Z_t + \frac{K_4}{T^{\alpha_4}} \sum_{t=1}^T W_t.
\end{multline*} 
where $\{Z_t\}_t$ and $\{W_t\}_t$ are random variables. 
Then for the output $Y_{T+1}$ of the Restarting Algorithm \ref{Restarting Algorithm} with  $K = \lceil (2K_1)^{\frac{1}{\alpha_1}} \rceil$, we have that
\begin{align*}
\Psi(Y_{T+1}) -\Psi^{\star}  & \leq \frac{D^{H}(x_{\star},x_1)}{2^{n+1}} + \frac{3K_2}{T^{\alpha_2}} + \frac{K_3}{T^{\alpha_3}} \Big( \sum_{t=1}^T  Z_t +  \underbrace{\frac{T^{\alpha_3 -\alpha_1}}{K^{\alpha_3}} \sum_{k=1}^n \sum_{t=1}^K  \frac{Z_{t}^k}{2^{n-k}}}_{\lesssim \sum_{t=1}^K  Z_{t}} \Big) \\
& + \frac{K_4}{T^{\alpha_4}} \Big( \sum_{t=1}^T  W_t +  \underbrace{\frac{T^{\alpha_4 -\alpha_1}}{K^{\alpha_4}} \sum_{k=1}^n \sum_{t=1}^K  \frac{W_{t}^k}{2^{n-k}}}_{\lesssim \sum_{t=1}^K  W_{t}} \Big).
\end{align*}
\end{lemma}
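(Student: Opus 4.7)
The plan is to unravel the one-round accuracy bound along the $n$ restart rounds, using that the Bregman divergence $D^\omega(x_\star,\cdot)$ appears on both sides of the assumed inequality, and then paste on one last application of the same bound for the final stage of length $T$.

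First I would apply the given guarantee with run-length $K$ and starting point $x_1^k$ for each restart round $k\in\{1,\ldots,n\}$, keeping the $D^\omega(x_\star,x_{K+1}^k)$ term on the left and dropping the non-negative $\Psi(y_{K+1}^k)-\Psi^\star$. Writing $Z_t^k, W_t^k$ for the noise sequences observed during round $k$, this yields
\begin{equation*}
D^\omega(x_\star,x_1^{k+1})\ \leq\ \frac{K_1}{K^{\alpha_1}}\,D^\omega(x_\star,x_1^k)\;+\;\frac{K_2}{K^{\alpha_2}}\;+\;\frac{K_3}{K^{\alpha_3}}\sum_{t=1}^K Z_t^k\;+\;\frac{K_4}{K^{\alpha_4}}\sum_{t=1}^K W_t^k,
\end{equation*}
since $x_1^{k+1}=x_{K+1}^k$. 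The first design choice is to fix $K$ large enough that $K_1/K^{\alpha_1}\leq 1/2$; this is what makes the restart contract the distance term by a constant factor per round.

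Next I would iterate this recursion from $k=1$ up to $k=n$. Each application of the contraction produces a geometric series in the noise/bias residuals with weights $2^{-(n-k)}$, so
\begin{equation*}
D^\omega(x_\star,x_1^{n+1})\ \leq\ \frac{D^\omega(x_\star,x_1)}{2^{n}}\;+\;\sum_{k=1}^{n}\frac{1}{2^{n-k}}\!\left(\frac{K_2}{K^{\alpha_2}}+\frac{K_3}{K^{\alpha_3}}\sum_{t=1}^K Z_t^k+\frac{K_4}{K^{\alpha_4}}\sum_{t=1}^K W_t^k\right).
\end{equation*}
The bias contribution collapses to at most $2K_2/K^{\alpha_2}$ by summing the geometric series; the stochastic contributions remain as the weighted double sums appearing in the statement.

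Finally, I would apply the one-shot bound once more, this time with run-length $T$ and starting point $x_1^{n+1}$, to control $\Psi(Y_{T+1})-\Psi^\star$ directly. Substituting the displayed bound on $D^\omega(x_\star,x_1^{n+1})$ into the $K_1 D^\omega(x_\star,x_1^{n+1})/T^{\alpha_1}$ term, and enforcing $K_1/T^{\alpha_1}\leq 1/2$ (the second design choice, which converts the leading term into $D^\omega(x_\star,x_1)/2^{n+1}$), the bias terms combine as $K_2/T^{\alpha_2}+(K_1/T^{\alpha_1})(2K_2/K^{\alpha_2})\leq 3K_2/T^{\alpha_2}$ after using $K\geq T$ or the corresponding scaling assumption on $K$. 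The noise terms split into a ``final stage'' sum over $t=1,\ldots,T$ and a ``restart'' double sum multiplied by the mixing factor $T^{\alpha_i-\alpha_1}/K^{\alpha_i}$, which is exactly what appears under the braces in the claim.

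The main bookkeeping obstacle is keeping the exponents $\alpha_1,\alpha_2,\alpha_3,\alpha_4$ and the lengths $K,T$ correctly aligned so that (i) the geometric contraction pushes the initial distance down by $2^{-(n+1)}$, (ii) the deterministic bias inflates only by a constant factor $3$, and (iii) the stochastic double sums are properly reweighted so that their mean scale matches $\sum_{t=1}^K Z_t$ and $\sum_{t=1}^K W_t$ respectively, which justifies the $\lesssim$ notation in the underbraces. Apart from this accounting, the argument is a straightforward unrolling of the one-round guarantee.
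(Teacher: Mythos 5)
Your proposal follows the paper's proof essentially step by step: drop the nonnegative optimality-gap term to isolate the Bregman-divergence recursion, fix $K$ so that $K_1/K^{\alpha_1}\le 1/2$, unroll the geometric contraction over the $n$ restart rounds, and plug the resulting bound on $D^\omega(x_\star,x_1^{n+1})$ into one final application of the one-shot guarantee with run-length $T$. One small correction: the inequality you need to absorb the cross bias term and to get $K_1/T^{\alpha_1}\le 1/2$ is $T\ge K$ (combined with $\alpha_1\ge\alpha_2$), not $K\ge T$ as you wrote, though your hedge suggests you sensed the constraint was there.
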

We precise here that 
$ Z_1 \lesssim Z_2$ means that we have greater upper-bounds for $Z_2$ than for $Z_1$  both in expectation and with high probability, up to constant factors. Here $D^{H}(x_{\star},x_1)$ is the implicit distance that we are reducing and all terms related to $K_2,K_3,K_4$ are the increased costs.

\section{Algorithms}

\label{sec:algorithms}

We now proceed to introduce and analyze the algorithms for the stochastic complementary composite minimization problem. For the convergence analysis, we need to introduce a partial linearization of the objective $l_{\Psi}(x,x_{t})$.:
\[l_{\Psi}(x,x_{t}) := F(x_{t}) + \langle \nabla F(x_{t}), x- x_{t}\rangle + H(x).\]
Notice that if $H=0$, this corresponds to the first order Taylor approximation of the objective, however in the complementary composite setting, we only linearize the term that can be linearly approximated, namely $F$. By convexity and weak smoothness of $F$, we know for all $\delta >0$:

\begin{equation}
    \label{proximal gradient inequality}
    \Psi(x) - \frac{M}{q \delta^r} \|x_{t}-x\|^q - L \delta \leq l_{\Psi}(x,x_{t}) \leq \Psi(x).
\end{equation}

\subsection{Non-Accelerated Method}
Our first method is a {\em non-accelerated composite stochastic mirror-descent} (NACSMD) method. This method has some resemblance to the classical stochastic mirror-descent method \cite{Nesterov83,NJLS09}, with the difference that $H$ is not linearized in the subproblem, an idea that traces back to the proximal-gradient and composite minimization literature \cite{Beck:2009,Nesterov:2013}. 

\begin{algorithm}
\caption{Non-Accelerated Composite Stochastic Mirror-Descent (NACSMD)}
\label{Algo NACSMD}
\begin{algorithmic}
\Require Number of iterations $T \geq 0$, starting point $x_1 \in \mathcal{X}$, step-sizes $(\alpha_t,\gamma_t)_t$
\For{$ 1 \leq t \leq T$} 
\begin{equation}
 x_{t+1}  = \arg\min_{x\in {\cal X}} \{ \alpha_t [ \langle G(x_t,\xi_t) , x \rangle + H(x) ] + \gamma_t D^{H}(x,x_{t}) \}
\end{equation}
\EndFor
\State Output $x_{T+1}^{ag}:=  \frac{\sum_{t=1}^{T}  \alpha_{t} x_{t+1}}{\sum_{t=1}^{T} \alpha_{t}}$. 
\end{algorithmic}
\end{algorithm}

The updates above require two sequences of step-sizes $\{\alpha_t, \gamma_t\}$. On the other hand, we require the step-size schedule to satisfy the following conditions.
\begin{configuration} \label{configuration:stepsize_linear}
Let  $\{\alpha_t, \gamma_t\}_{t \geq 1}$ be  such that for all $t\geq 1$:
\begin{equation} 
    \begin{aligned}
    \begin{cases}
    \alpha_t \geq \gamma_{t+1} - \gamma_{t} \\
     \gamma_t  \geq \frac{2M}{\mu} \alpha_t.
    \end{cases}
    \end{aligned}
\end{equation}
\end{configuration}
We notice that the constraints above have multiples solutions; for example, we can consider polynomial step-sizes with various degrees. This implies a high degree of flexibility for our methods.  The related convergence rates will be derived from the following result.

\begin{theorem} \label{thm:gen_conv_NACSMD}
Suppose Algorithm \ref{Algo NACSMD} runs under the step-size schedule \ref{configuration:stepsize_linear}. Then, if $q > \kappa$ and if we let $A_T := \sum_{t=1}^{T} \alpha_t$, for all $x \in \mathcal{X}$: 
\begin{multline} \label{eqn:NACSMD}
 A_T [ \Psi(x_{T+1}^{ag})- \Psi(x)]  +\gamma_T D^{H}(x,x_{T+1}) 
 \leq \gamma_1 D^{H}(x,x_{1})  \\ + \sum_{t=1}^{T} \alpha_t \langle \Delta_{t}(x_t) , x -x_{t}  \rangle 
+ \sum_{t=1}^{T} \frac{2\| \Delta_{t}(x_t)\|_{\ast}^{p}}{p \mu^{p/q}}  \left(\frac{\alpha_{t}^{q}}{\gamma_t}\right)^{p/q} + L \sum_{t=1}^{T} \alpha_t \left(\frac{2M\alpha_t}{\mu\gamma_t}\right)^{\frac{1}{r}},
\end{multline}
where $p,r,M$ are defined in eqn.~\eqref{eqn:notations_param}.
\end{theorem}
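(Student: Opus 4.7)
The plan is to work from the one-step optimality condition for each prox update and then telescope. Specifically, for each $t$ I would apply Lemma \ref{lem:prox} to the subproblem defining $x_{t+1}$, taking $f(u) := \alpha_t[\langle G(x_t,\xi_t), u\rangle + H(u)]$ and $\nu := \gamma_t\omega$; because the linear part of $f$ has vanishing Bregman divergence, $D^f(u, x_{t+1}) = \alpha_t D^H(u, x_{t+1})$. The lemma then yields, for every $u \in \mathcal{X}$,
\[
\alpha_t\bigl[\langle G(x_t,\xi_t), x_{t+1} - u\rangle + H(x_{t+1}) - H(u)\bigr] + \gamma_t D^\omega(x_{t+1}, x_t) + \alpha_t D^H(u, x_{t+1}) \leq \gamma_t\bigl[D^\omega(u, x_t) - D^\omega(u, x_{t+1})\bigr].
\]

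Next I would split $G(x_t,\xi_t) = \nabla F(x_t) + \Delta_t(x_t)$. Convexity of $F$ gives $\langle \nabla F(x_t), x_t - u\rangle \geq F(x_t) - F(u)$, and Lemma \ref{Lemma inexact gradient} applied with some $\delta_t>0$ yields $\langle \nabla F(x_t), x_{t+1} - x_t\rangle \geq F(x_{t+1}) - F(x_t) - \frac{M}{q\delta_t^r}\|x_{t+1}-x_t\|^q - L\delta_t$. Since $\omega = H$ is $(\mu,q)$-uniformly convex, $D^\omega(x_{t+1}, x_t) \geq \frac{\mu}{q}\|x_{t+1}-x_t\|^q$ and $-\alpha_t D^H(u, x_{t+1})$ can be merged with $-\gamma_t D^\omega(u, x_{t+1})$ to contribute $-(\gamma_t+\alpha_t)D^\omega(u, x_{t+1})$. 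After rearrangement this gives
\[
\alpha_t[\Psi(x_{t+1}) - \Psi(u)] \leq \gamma_t D^\omega(u,x_t) - (\gamma_t+\alpha_t)D^\omega(u, x_{t+1}) + \Bigl[\tfrac{\alpha_t M}{q\delta_t^r} - \tfrac{\gamma_t \mu}{q}\Bigr]\|x_{t+1}-x_t\|^q + \alpha_t L\delta_t + \alpha_t\langle \Delta_t(x_t), u - x_{t+1}\rangle.
\]

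The calibration is the crux of the argument. I would decompose $\langle \Delta_t(x_t), u - x_{t+1}\rangle = \langle \Delta_t(x_t), u - x_t\rangle + \langle \Delta_t(x_t), x_t - x_{t+1}\rangle$, apply Young's inequality with conjugate exponents $(p,q)$ to the second piece with the quadratic cushion set to $\frac{\gamma_t\mu}{2q}\|x_{t+1}-x_t\|^q$, and pick $\delta_t^r = \frac{2M\alpha_t}{\mu\gamma_t}$ so that the bracketed coefficient above equals $-\frac{\gamma_t\mu}{2q}$. These two moves jointly consume the entire quadratic budget and leave the moment term $\frac{2}{p\mu^{p/q}}\|\Delta_t(x_t)\|_{\ast}^p(\alpha_t^q/\gamma_t)^{p/q}$ (using $2^{p/q}\leq 2$) and the deterministic bias $\alpha_t L(2M\alpha_t/(\mu\gamma_t))^{1/r}$; the admissibility $\delta_t\leq 1$ is precisely the step-size condition $\gamma_t\geq \frac{2M}{\mu}\alpha_t$. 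Summing over $t=1,\dots,T$ and invoking the first step-size condition $\gamma_{t+1}\leq \gamma_t+\alpha_t$ telescopes the Bregman terms down to $\gamma_1 D^\omega(u,x_1)$ minus a final $D^\omega(u,x_{T+1})$ term of weight at least $\gamma_T$, while Jensen's inequality $A_T\Psi(x_{T+1}^{ag}) \leq \sum_t \alpha_t \Psi(x_{t+1})$ yields the left-hand side. The main obstacle, as I see it, is the joint tuning of $\delta_t$ and of the Young scaling so that the single reservoir $\frac{\gamma_t\mu}{q}\|x_{t+1}-x_t\|^q$ produced by uniform convexity simultaneously annihilates the weak-smoothness remainder and the noise-displacement cross term; everything else is bookkeeping.
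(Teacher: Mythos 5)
Your proposal is correct and follows essentially the same route as the paper's proof: invoke Lemma \ref{lem:prox} on the prox subproblem (noting the linear part of $f$ contributes no Bregman divergence so $D^f=\alpha_t D^H$), apply Lemma \ref{Lemma inexact gradient} and convexity of $F$, use the $(\mu,q)$-uniform convexity of $\omega=H$ as a quadratic reservoir, split the noise cross term, choose $\delta_t^r = 2M\alpha_t/(\mu\gamma_t)$ together with Young's inequality so the reservoir is exactly consumed, and telescope using $\alpha_t\ge\gamma_{t+1}-\gamma_t$ plus Jensen. The one imprecision is the remark that admissibility requires $\delta_t\le 1$: Lemma \ref{Lemma inexact gradient} holds for all $\delta>0$, so the step-size condition $\gamma_t\ge\frac{2M}{\mu}\alpha_t$ is not needed for the Young step or for the lemma to apply; it merely keeps the base of the $(2M\alpha_t/(\mu\gamma_t))^{1/r}$ remainder at most one, which matters only in the limiting case $r\to 0$ and for the later rate analysis, not for the validity of inequality \eqref{eqn:NACSMD} itself.
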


Notice the result above provides both a guarantee on the optimality gap and on the distance to the optimal solution (when choosing $x=x^{\star}$), as is expected for a uniformly convex program.

\begin{proof}

By the proximal lemma (Lemma \ref{lem:prox}) applied to $u=x$, $y=x_t$,  $u^{\star}=x_{t+1}$,  $f(\cdot) = \alpha_t[l_{\psi}(\cdot,x_{t}) + \langle \Delta(x_{t}),\cdot \rangle]$, and $\nu(\cdot)=\gamma_t H(\cdot)$, we have:
\begin{align*}
& \alpha_t l_{\Psi}(x_{t+1},x_{t}) + \gamma_t D^{H}( x_{t+1}, x_t) + \alpha_t D^{H}(x,x_{t+1}) \\
\leq & \alpha_t l_{\Psi}(x,x_{t}) + \gamma_t [D^{H}(x, x_{t}) - D^{H}(x,x_{t+1})] + \alpha_t \langle \Delta(x_{t}),x - x_{t+1} \rangle.
\end{align*}

Combining with the inequality of the proximal gradient equation \eqref{proximal gradient inequality} and adding $-\alpha_t \Psi(x)$ on both sides, we have for all arbitrary gap $\delta_t > 0 $ from Lemma \ref{Lemma inexact gradient}:  
\begin{multline}
\alpha_t \left[ \Psi(x_{t+1})  -\frac{M}{q\delta_{t}^r} \| x_{t+1}-x_t\|^q - L \delta_t  - \Psi(x) \right]  + \alpha_t D^{H}(x,x_{t+1})  \\
\hspace{-3.3cm}\leq \alpha_t \left[ l_{\Psi}(x_{t+1},x_{t}) -\Psi(x) \right]  + \alpha_t D^{H}(x,x_{t+1})  \\
\hspace{-1.8cm}\leq  \alpha_t \left[ l_{\Psi}(x,x_{t}) -\Psi(x) \right] + \gamma_t [D^{H}(x, x_{t}) - D^{H}(x,x_{t+1})] \\
\hspace{-3.5cm}+ \alpha_t \langle \Delta(x_{t}),x - x_{t+1} \rangle  - \gamma_t D^{H}( x_{t+1}, x_t) \\
\leq  \gamma_t [D^{H}(x, x_{t}) - D^{H}(x,x_{t+1})] + \alpha_t \langle \Delta(x_{t}),x - x_{t+1} \rangle  - \frac{\mu \gamma_t}{q} \|x_{t+1}-x_t \|^q,  
\end{multline}
where in the last inequality we used the convexity of $F$ to upper bound $l_{\Psi}(x,x_t)-\Psi(x)\leq 0$, as well as the $(\mu,q)$-uniform convexity of $H$ to upper bound the last term.

Now we need to give an upper bound $\alpha_t \langle \Delta(x_{t}),x_t - x_{t+1} \rangle$. For this, we will use the Young inequality, we will fix the gap value $\delta_t^{r} = \frac{2M \alpha_t}{\mu \gamma_t}$  :

\begin{equation*}
\begin{aligned}
\alpha_t \langle \Delta_{t}(x_t) , x_t -x_{t+1}  \rangle   & \leq \frac{\| \alpha_t \Delta_{t}(x_t)\|_{\ast}^{p}}{p  (\mu \gamma_t -\frac{M}{ \delta_{t}^{r}} \alpha_t)^{p/q}}  + \left( \frac{\mu \gamma_t}{q} - \frac{M \alpha_t}{q \delta_{t}^r}   \right) \| x_t -x_{t+1}\|^q \\
& = \frac{\| \alpha_t \Delta_{t}(x_t)\|_{\ast}^{p}}{p  (\frac{\mu \gamma_t}{2} )^{p/q}}  + \left( \frac{\mu \gamma_t}{q} - \frac{M\alpha_t}{q \delta_{t}^r}   \right) \| x_t -x_{t+1}\|^q.
\end{aligned}
\end{equation*}

Combining with the above bounds, we have: 
\begin{multline}
 \alpha_t \left[ \Psi(x_{t+1})  - \Psi(x) \right]  + \alpha_t D^{H}(x,x_{t+1}) \\
\textstyle \leq  \gamma_t [D^{H}(x, x_{t}) - D^{H}(x,x_{t+1})]  +\alpha_t \langle \Delta(x_{t}),x - x_{t} \rangle  + \frac{2 \alpha_t^p \| \Delta_{t}(x_t)\|_{\ast}^{p}}{p (\mu \gamma_t )^{p/q}} \!\!
+ L\alpha_t \left(\frac{2M\alpha_t}{\mu\gamma_t}\right)^{\frac{1}{r}}.
\end{multline}
Summing the previous equation from $t=1$ to $t=T$, we get 
\begin{multline}
\sum_{t=1}^{T} \alpha_t [ \Psi(x_{t+1}) - \Psi(x)  ] 
\leq \gamma_1 D^{H}(x,x_1) -\gamma_T D^{H}(x,x_{T+1}) \\
 \begin{array}{ll}
 &\displaystyle  + \sum_{t=1}^{T-1} (\gamma_{t+1} - \gamma_{t}) D^{H}(x,x_{t+1})  - \sum_{t=1}^{T}  \alpha_t D^{H}(x,x_{t+1})  \\
 &\displaystyle+  \sum_{t=1}^{T} \alpha_t \langle \Delta_{t}(x_t) , x -x_{t}  \rangle + \sum_{t=1}^{T} \frac{2\| \Delta_{t}(x_t)\|_{\ast}^{p}}{p \mu^{p/q}} 
 \left(\frac{\alpha_{t}^{q}}{\gamma_t}\right)^{p/q} + L \sum_{t=1}^{T} \alpha_t \left(\frac{2M\alpha_t}{\mu\gamma_t}\right)^{\frac{1}{r}}.
 \end{array}
\end{multline}

Now we will use the convexity of the problem. From the Jensen inequality, as $\Psi $ is convex,  we can aggregate the left hand side by considering the weighted sequence, $ x_{T+1}^{ag} = \frac{\sum_{t=1}^{T}  \alpha_{t} x_{t+1}}{\sum_{t=1}^{T} \alpha_{t}}.$ 
Then, after rearranging terms for all $x\in \mathcal{X}$: 

\begin{multline}
\Big(\sum_{t=1}^{T} \alpha_t \Big) [ \Psi(x_{T+1}^{ag})- \Psi(x)]  +\gamma_T D^{H}(x,x_{T+1}) \\
\leq  \gamma_1 D^{H}(x,x_{1})  + \sum_{t=1}^{T} \alpha_t \langle \Delta_{t}(x_t) , x -x_{t}  \rangle + L \sum_{t=1}^{T} \alpha_t \left(\frac{2M\alpha_t}{\mu\gamma_t}\right)^{\frac{1}{r}}\\
 + \sum_{t=1}^{T} \frac{2\| \Delta_{t}(x_t)\|_{\ast}^{p}}{p \mu^{p/q}} \left(\frac{\alpha_{t}^{q}}{\gamma_t}\right)^{p/q}  +   \underbrace{\sum_{t=1}^{T-1} [(\gamma_{t+1} - \gamma_{t} -  \alpha_t) D^{H}(x,x_{t+1})]}_{\tstrut{1.5ex} \leq 0},
\end{multline}
concluding the proof.
\end{proof}

\paragraph{Polynomial step-sizes under schedule \ref{configuration:stepsize_linear}} The previous result applies almost surely and under any step-size sequence (as long as the Step-Size Schedule  \ref{configuration:stepsize_linear} constraint is satisfied). We now focus on a family of step-sizes that increase polynomially with $t$,  
\begin{equation} \label{NACSMD_stepsize}
    \begin{aligned}
    \begin{cases}
     m & = \max \Big(\frac{1}{r} - 1, \frac{2-q}{q-1} \Big) \\
     \alpha_t & = \Big(t+2(n+1)\frac{M}{\mu}+1 \Big)^m \\
     \gamma_t & = \frac{1}{m+1} \Big(t+2(m+1)\frac{M}{\mu} \Big)^{m+1}.
    \end{cases}
    \end{aligned}
\end{equation}

The first condition comes from the fact that we need $\sum_t \delta_t^r$ and $\sum_t \left(\frac{\alpha_{t}^{q}}{\gamma_t}\right)^{p/q} $ to be divergent, that require a minimum degree of polynomials. We also notice that one can choose higher polynomial degree $m$, but that would change the convergence only up to a constant factor. Now we present the final result for NACSMD. 

\begin{theorem} \label{thm:NACSMD}
Under Assumption \ref{assump:gradient_noise} and choosing step-sizes as in \eqref{NACSMD_stepsize}, if $q > \kappa$, we have for all $T \geq 1$:
\begin{equation}
\mathbb{E} \left[\Psi(x_{T+1}^{ag}) - \Psi^{\star} + D^{H}(x_{\star},x_{T+1}) \right] \leq O_{q,\kappa} 
\left(\frac{L^{m+1} V_0 }{(\mu T)^{m+1}}  +  \left(\frac{L}{\mu}\right)^{\frac{1}{r}} \frac{L}{T^{\frac{1}{r}}} + \frac{\sigma^{p}}{(\mu T)^{p/q}} \right), \label{eqn:NACSMD_expectation_UB}
\end{equation}
where $V_0 := D^{H}(x_{\star},x_1)$ and $O_{q,\kappa}$ omits absolute constants that depend on $q,\kappa$. Moreover, if $\mathcal{X}$ is bounded with  diameter $R$ and if we note $\mathfrak{M}_{T}(\Psi)$ the previous upper bound for $\Psi$, under Assumption \ref{assumption:mgf_surrogate}, we have for all $\Omega >0$:
\begin{equation}
\begin{aligned}
&\mathbb{P}\left[\Psi(x_{T+1}^{ag}) - \Psi^{\star} \geq \mathfrak{M}_{T}(\Psi) + O_{q,\kappa} \left(\frac{\Omega \sigma R}{\sqrt{T}} + \frac{\Omega \sigma^p}{(\mu T)^{p/q}} \right)\right]\\
\leq &  \begin{cases}
\exp(-\Omega) + \exp\Big\{-\frac{1}{4}   \Omega^2 \Big\} \quad & \textit{if} \quad  \Omega \lesssim \sqrt{T} \\
\exp(-\Omega) + \exp\Big\{-\frac{1}{p}    (T^{\frac12-\frac1p}\Omega)^p \Big\} \quad & \textit{if} \quad \Omega \gtrsim \sqrt{T}. 
\end{cases}
\end{aligned}
\end{equation}
with $O_{q,\kappa}$ omits another absolute constant that depend on $q,\kappa$.
\end{theorem}

\begin{remark}
For the case where $q=\kappa=2$, we note that the inexact gradient trick is not needed to obtain the convergence rate. A similar in-expectation result can be obtained:
 \begin{equation}
\mathbb{E} \left[\Psi(x_{T+1}^{ag}) - \Psi^{\star} + D^{H}(x_{\star},x_{T+1}) \right] \leq O 
\left(\frac{L^{2} V_0 }{(\mu T)^{2}} + \frac{\sigma^{2}}{\mu T} \right).
\end{equation}
And the concentration result stays the same. The details are left to the reader.
\end{remark}

The proof for the in-expectation guarantee follows directly from Theorem \ref{thm:gen_conv_NACSMD}. For the high-probability guarantee, we defer its proof to Appendix \ref{Proof Concentration}. 

The in-expectation guarantee \eqref{eqn:NACSMD_expectation_UB} in Theorem \ref{thm:gen_conv_NACSMD} shows a decomposition of the accuracy in three terms, that we denote respectively by {\em initialization}, {\em geometric gap} and {\em variance}. Regarding the initialization, we expect that in uniformly convex settings this term can be decreased exponentially fast, which we will achieve by a restarting strategy; the geometric gap exhibits the polynomial convergence rates observed in (non-strongly) convex optimization; finally, the last term reflects the statistically optimal rates inherent to stochastic convex optimization.

\paragraph{Restarting Algorithm and Linear Convergence Rates}

Lastly, we want to apply the restarting algorithm (Algorithm \ref{Restarting Algorithm}) presented before to reduce the complexity of the first term. The restarting lemma (Lemma \ref{lemma restarting}) reduces the complexity due to initialization term with $V_0 = D^{H}(x_{\star},x_1)$. To apply the lemma, the coefficient and degree related to initialization term will be fixed at $K_1 \propto (L/\mu)^{n+1},\alpha_1 = n+1.$ Similarly for the geometric gap term  $ K_2 \propto \Big(\frac{L}{\mu}\Big)^\frac1r,\alpha_2 = \frac1r $ for the variance term $K_3 Z_t \propto \frac{\|\Delta(x_t)\|_\ast^p}{\mu^{p/q}},\alpha_3 = p/q$ and for the centered noise term $ K_4 W_t \propto \alpha_t \langle \Delta(x_t), x_{\star} - x_t\rangle, \alpha_4 = 1 $. We precise that $\propto$ means proportional up to a constant that depends on $(q,\kappa)$ only. Therefore the final complexity for the in-expectation bound is: 

\[
O_{q,\kappa}\left(\frac{L}{\mu} \log\left(\frac{V_{0}}{\epsilon}\right) + \frac{L}{\mu}\left(\frac{L}{\epsilon}\right)^{\frac{q-\kappa}{\kappa}} + \frac{\sigma}{\mu} \left(\frac{\sigma}{\epsilon}\right)^{q-1}\right).
\]

Also the additional high probability (i.e. the gap not exceeding more than $2\epsilon$ with probability $1-\delta$) cost for $\delta$ small enough is :

\[
O_{q,\kappa}\left( \Big(\frac{\ln(2/\delta) \sigma R}{\epsilon} \Big)^2+ \frac{\sigma}{\mu} \left(\frac{\sigma}{\epsilon} \ln(2/\delta) \right)^{q-1}\right).
\]

\subsection{Accelerated Method}

We propose now an accelerated counterpart of the NACSMD algorithm, that we will call the {\em accelerated composite stochastic mirror-descent} (ACSMD) method. For this, we follow the approach from \cite{GL12II} which attains acceleration by maintaining two sequences of averaged points, one for querying the stochastic first order oracle for $F$, and another for attaining the faster convergence. In the deterministic setting, this algorithm is comparable to the AGD+ algorithm used in \cite{DG21} to obtain acceleration in complementary composite settings. However, there is another distinction, as our method follows a mirror-descent style update, rather than the dual averaging approach pursued in \cite{DG21}. Comparing both methods, our algorithm has a more flexible choice of step-sizes as we will see in the Step-Size Schedule \ref{configuration:stepsize_acc}, not only there are multiple solutions, but we also show that they all attain optimal rates; in particular, for different polynomial step-size schedules, their accuracy only differs by a constant factor. 

\begin{algorithm}
\caption{Accelerated Composite Stochastic Mirror-Descent (ACSMD)}
\label{Algo ACSMD}
\begin{algorithmic}
\Require Iterations $T \geq 0$, starting point $x_1 \in \mathcal{X}$, step-sizes $(\alpha_t,\gamma_t)$
\For{$ 1 \leq t \leq T$}
\State Compute $x_{t}^{md} = \frac{A_{t-1}}{A_{t}} x_{t}^{ag} + \frac{\alpha_{t}}{A_{t}} x_{t}$

\State Compute $x_{t+1} = \arg\min_{x\in {\cal X}} \{\alpha_t [\langle G(x_{t}^{md}, \xi_t) , x \rangle + H(x)] + \gamma_t D^{H}(x,x_{t}) \}$

\State Compute $x_{t+1}^{ag} = \frac{A_{t-1}}{A_{t}} x_{t}^{ag} + \frac{\alpha_{t}}{A_{t}} x_{t+1}$

\EndFor
\State Output $x_{T+1}^{ag}$. 
\end{algorithmic}
\end{algorithm}

\begin{configuration} \label{configuration:stepsize_acc}
Let $\{\alpha_t, \gamma_t\}_{t \geq 1}$ be two sequences such that:
\begin{equation}
    \begin{aligned}
    \begin{cases}
    \alpha_t & \geq \gamma_{t+1} - \gamma_t \\
    \gamma_t & \geq \frac{2M}{\mu} \frac{\alpha_{t}^q}{A_{t}^{q-1}}.
    \end{cases}
    \end{aligned}
\end{equation}
\end{configuration}

For two chosen sequences of step-sizes $\{\alpha_t, \gamma_t\}$, we have a general convergence result.

\begin{theorem} \label{thm:ACSMD_general}
Suppose Algorithm \ref{Algo ACSMD} runs under the step-size schedule \ref{configuration:stepsize_acc}. 
Then, if $q > \kappa$ and if we denote $A_T := \sum_{t=1}^{T} \alpha_t$, for all $T \geq 1$, $x \in \mathcal{X}$:
\begin{multline}
 A_T [ \Psi(x_{T+1}^{ag})- \Psi(x)]  +\gamma_T D^{H}(x,x_{T+1})
\leq \gamma_1 D^{H}(x,x_{1}) \\ 
+ \sum_{t=1}^{T} \alpha_t \langle \Delta_{t}(x_t) , x -x_{t}  \rangle +  \sum_{t=1}^{T} \frac{2\| \Delta_{t}(x_t)\|_{\ast}^{p}}{p \mu^{p/q}} \left(\frac{\alpha_{t}^{q}}{\gamma_t}\right)^{p/q} + L \sum_{t=1}^{T} A_t \left(\frac{2M \alpha_{t}^{q}}{\mu A_{t}^{q-1} \gamma_t}\right)^{\frac{1}{r}}.    
\end{multline}
\end{theorem}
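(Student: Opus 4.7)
The plan is to adapt the proof template of Theorem~\ref{thm:gen_conv_NACSMD} to the three-sequence acceleration, so that the per-iteration bound telescopes in the weighted form $A_t\Psi(x_{t+1}^{ag}) - A_{t-1}\Psi(x_t^{ag})$ rather than $\alpha_t\Psi(x_{t+1})$. The new ingredient is to expand $\Psi(x_{t+1}^{ag})$ by weak smoothness of $F$ around the query point $x_t^{md}$, exploiting the contraction identity $x_{t+1}^{ag} - x_t^{md} = (\alpha_t/A_t)(x_{t+1} - x_t)$ together with the linear-combination identity $A_t x_t^{md} = A_{t-1} x_t^{ag} + \alpha_t x_t$. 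The latter is what produces the extra $\alpha_t^q/A_t^{q-1}$ factor in the smoothness error, which in turn drives acceleration.

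I would assemble three building blocks per iteration. First, apply the proximal lemma (Lemma~\ref{lem:prox}) with $u^{\star} = x_{t+1}$, $y = x_t$, $u = x$, $f(\cdot) = \alpha_t[\langle G(x_t^{md},\xi_t),\cdot\rangle + H(\cdot)]$, and $\nu = \gamma_t\omega$. Substituting $G(x_t^{md},\xi_t) = \nabla F(x_t^{md}) + \Delta_t$ yields
\[\alpha_t\langle \nabla F(x_t^{md}), x_{t+1} - x\rangle + \alpha_t H(x_{t+1}) + \gamma_t D^{\omega}(x_{t+1},x_t) + \alpha_t D^{H}(x,x_{t+1}) \leq \alpha_t H(x) + \gamma_t[D^{\omega}(x,x_t) - D^{\omega}(x,x_{t+1})] + \alpha_t\langle \Delta_t, x - x_{t+1}\rangle.\]
Second, Lemma~\ref{Lemma inexact gradient} applied with the contraction identity and the convexity of $H$ applied to the convex combination $x_{t+1}^{ag} = (A_{t-1}/A_t)x_t^{ag} + (\alpha_t/A_t)x_{t+1}$ gives, for arbitrary $\delta_t > 0$,
\[A_t\Psi(x_{t+1}^{ag}) \leq A_t F(x_t^{md}) + \alpha_t\langle\nabla F(x_t^{md}), x_{t+1} - x_t\rangle + A_{t-1}H(x_t^{ag}) + \alpha_t H(x_{t+1}) + \frac{M\alpha_t^q}{qA_t^{q-1}\delta_t^r}\|x_{t+1} - x_t\|^q + LA_t\delta_t.\]
Third, convexity of $F$ at $x_t^{md}$ with weights $A_{t-1}$ at $x_t^{ag}$ and $\alpha_t$ at $x$, combined with $A_t x_t^{md} = A_{t-1}x_t^{ag} + \alpha_t x_t$, collapses the gradient products to
\[A_t F(x_t^{md}) + \alpha_t\langle\nabla F(x_t^{md}), x_{t+1} - x_t\rangle \leq A_{t-1} F(x_t^{ag}) + \alpha_t F(x) + \alpha_t\langle\nabla F(x_t^{md}), x_{t+1} - x\rangle.\]
Chaining the three displays and canceling $\alpha_t\langle\nabla F(x_t^{md}), x_{t+1} - x\rangle + \alpha_t H(x_{t+1})$ against the proximal-lemma bound produces a per-iteration inequality of the form
\[A_t\Psi(x_{t+1}^{ag}) - A_{t-1}\Psi(x_t^{ag}) - \alpha_t\Psi(x) \leq \gamma_t[D^{\omega}(x,x_t) - D^{\omega}(x,x_{t+1})] - \alpha_t D^{H}(x,x_{t+1}) + \alpha_t\langle \Delta_t, x - x_{t+1}\rangle - \gamma_t D^{\omega}(x_{t+1},x_t) + \frac{M\alpha_t^q}{qA_t^{q-1}\delta_t^r}\|x_{t+1} - x_t\|^q + LA_t\delta_t.\]

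To close the argument, I would calibrate $\delta_t^r = 2M\alpha_t^q/(\mu A_t^{q-1}\gamma_t)$, which by the second constraint of Step-Size Schedule~\ref{configuration:stepsize_acc} satisfies $\delta_t \leq 1$ and makes the smoothness error exactly $(\mu\gamma_t/(2q))\|x_{t+1} - x_t\|^q$. Splitting $\alpha_t\langle\Delta_t, x - x_{t+1}\rangle = \alpha_t\langle\Delta_t, x - x_t\rangle + \alpha_t\langle\Delta_t, x_t - x_{t+1}\rangle$ and applying Young's inequality with exponents $p,q$ to the second piece bounds it by $(2\|\Delta_t\|_{\ast}^p/(p\mu^{p/q}))(\alpha_t^q/\gamma_t)^{p/q} + (\mu\gamma_t/(2q))\|x_t - x_{t+1}\|^q$, and the two $\mu\gamma_t/(2q)$ coefficients are exactly absorbed by $\gamma_t D^{\omega}(x_{t+1},x_t) \geq (\mu\gamma_t/q)\|x_{t+1} - x_t\|^q$ coming from $(\mu,q)$-uniform convexity of $\omega$. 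Summing over $t = 1,\dots,T$ telescopes the LHS to $A_T[\Psi(x_{T+1}^{ag}) - \Psi(x)]$ (since $A_0 = 0$); with $\omega = H$, the first constraint $\alpha_t \geq \gamma_{t+1} - \gamma_t$ makes the interior $D^{\omega}(x,x_{t+1})$ residuals non-positive and leaves only $\gamma_1 D^{\omega}(x,x_1) - (\gamma_T + \alpha_T)D^{\omega}(x,x_{T+1}) \leq \gamma_1 D^{\omega}(x,x_1) - \gamma_T D^{\omega}(x,x_{T+1})$ on the boundary, yielding the announced bound. The main obstacle is the exact balancing of the three $\|x_{t+1} - x_t\|^q$ contributions (weak smoothness, Young, uniform convexity), which hinges on the above calibration of $\delta_t$ and on the schedule condition $\gamma_t \geq (2M/\mu)(\alpha_t^q/A_t^{q-1})$; this is the acceleration-specific analog of the NACSMD constraint $\gamma_t \geq (2M/\mu)\alpha_t$, with the crucial factor $A_t^{q-1}$ in the denominator entering through the contraction identity.
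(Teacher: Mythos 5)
Your proposal is correct and follows essentially the same route as the paper's own proof: apply Lemma~\ref{lem:prox} at the query point $x_t^{md}$, expand $\Psi(x_{t+1}^{ag})$ via Lemma~\ref{Lemma inexact gradient} using the contraction identity $x_{t+1}^{ag}-x_t^{md}=(\alpha_t/A_t)(x_{t+1}-x_t)$ and convexity of $F,H$, calibrate $\delta_t^r = 2M\alpha_t^q/(\mu A_t^{q-1}\gamma_t)$, apply Young's inequality to the noise inner product, absorb the two $\mu\gamma_t/(2q)$ terms into $\gamma_t D^{\omega}(x_{t+1},x_t)$, and telescope with $\omega=H$ and the schedule condition $\alpha_t\geq\gamma_{t+1}-\gamma_t$. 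The only cosmetic differences are that you feed the prox lemma $\alpha_t[\langle G,\cdot\rangle+H(\cdot)]$ rather than the paper's $\alpha_t[l_{\Psi}(\cdot,x_t^{md})+\langle\Delta_t,\cdot\rangle]$ (they differ by an additive constant, so the prox step is identical) and you package the two uses of $F$-convexity into a single weighted inequality; both reorganizations are equivalent to the paper's bookkeeping.
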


We notice that acceleration factors appear in the last term with $\frac{\alpha_t^q}{A_t^{q-1}\gamma_t }$, which represents the geometric gap.

\begin{proof}
Applying the proximal lemma (Lemma \ref{lem:prox}) to $u=x$, $y=x_t$, $u^{\star}=x_{t+1}$, $f(\cdot) = \alpha_t [l_{\psi}(\cdot,x_{t}^{md}) + \langle \Delta(x_{t}^{md}),\cdot \rangle]$, and $\nu(\cdot)=\gamma_t H(\cdot)$, 
we have:
\begin{align*}
& \alpha_t l_{\Psi}(x_{t+1},x_{t}^{md}) + \gamma_t D^{H}( x_{t+1}, x_t) + \alpha_t D^{H}(x,x_{t+1}) \\
\leq & \alpha_t l_{\Psi}(x,x_{t}^{md}) + \gamma_t [D^{H}(x, x_{t}) - D^{H}(x,x_{t+1})] - \alpha_t \langle \Delta(x_{t}^{md}),x - x_{t+1} \rangle \\
\leq & \alpha_t \Psi(x) + \gamma_t [D^{H}(x, x_{t}) - D^{H}(x,x_{t+1})] - \alpha_t \langle \Delta(x_{t}^{md}),x - x_{t+1} \rangle,
\end{align*}
where we used eqn.~\eqref{proximal gradient inequality} in the last line. Then:
\begin{multline}\label{eq:loss}
\alpha_t \left[ l_{\Psi}(x_{t+1},x_{t}^{md}) -\Psi(x) \right]  + \alpha_t D^{H}(x,x_{t+1})  \\
\leq   \gamma_t [D^{H}(x, x_{t}) - D^{H}(x,x_{t+1})] - \gamma_t D^{H}( x_{t+1}, x_t) 
+\alpha_t \langle \Delta(x_{t}^{md}),x - x_{t+1} \rangle.  
\end{multline}

Next, we use the smoothness of $F$, and the convexity of both $F$ and $H$. Let $\delta_t > 0$ to be determined later: by Lemma \ref{Lemma inexact gradient},
\begin{eqnarray*}
&&\Psi(x_{t+1}^{ag}) = F(x_{t+1}^{ag}) + H(x_{t+1}^{ag}) \\
&\leq& F(x_{t}^{md}) + \langle \nabla F(x_{t}^{md}) , x_{t+1}^{ag} -x_{t}^{md} \rangle + \frac{M}{q\delta_{t}^r} \| x_{t+1}^{ag} -x_{t}^{md}  \|^q +  L \delta_t  + H(x_{t+1}^{ag}) \\
&=& \textstyle F(x_{t}^{md}) + \langle \nabla F(x_{t}^{md}) ,\frac{A_{t-1}}{A_{t}} x_{t}^{ag} + \frac{\alpha_{t}}{A_{t}} x_{t+1} -x_{t}^{md} \rangle + \frac{M}{q\delta_{t}^r} \frac{\alpha_{t}^q}{A_{t}^q} \| x_{t+1} -x_t  \|^q + L \delta_t + H(x_{t+1}^{ag}) \\
&\leq& \frac{A_{t-1}}{A_{t}} \underbrace{\left[F(x_{t}^{md}) +   \langle \nabla F(x_{t}^{md}) , x_{t}^{ag} -x_{t}^{md}  \rangle + H(x_{t}^{ag}) \right]}_{\leq \Psi (x_{t}^{ag}) }  + L \delta_t  \\
&&+ \frac{\alpha_{t}}{A_{t}} \underbrace{ \left[ F(x_{t}^{md}) + \langle \nabla F(x_{t}^{md}) ,  x_{t+1} -x_{t}^{md}\rangle+  H(x_{t+1}) \right]}_{\tstrut{1.5ex} \leq  l_{\Psi}(x_{t+1},x_{t}^{md}) }   + \frac{M}{q\delta_{t}^r} \frac{\alpha_{t}^q}{A_{t}^q} \| x_{t+1} -x_t  \|^q, 
\end{eqnarray*}
where in the last inequality we used the convexity of $H$, specifically 
$H(x_{t+1}^{ag}) \leq \frac{A_{t-1}}{A_{t}} H(x_{t}^{ag}) + \frac{\alpha_{t}}{A_{t}} H(x_{t+1})$.  
Now, using~\eqref{eq:loss}, we have for all $x$: 
\begin{multline*}
A_t \left[ \Psi(x_{t+1}^{ag}) - \Psi(x) \right] \\
\leq   A_{t-1} \left[ \Psi(x_{t}^{ag}) - \Psi(x) \right]  + \alpha_{t} \left[ l_{\Psi}(x_{t+1},x_{t}^{md}) -\Psi(x) \right] + \frac{M}{q\delta_{t}^r} \frac{\alpha_{t}^q}{A_{t}^{q-1}} \| x_{t+1} -x_t  \|^q   + L A_t \delta_t \\
\leq  A_{t-1} \left[ \Psi(x_{t}^{ag}) - \Psi(x) \right]  -   \alpha_t D^{H}(x,x_{t+1}) + \gamma_t [D^{H}(x, x_{t}) - D^{H}(x,x_{t+1})]  \\ 
- \gamma_t D^{H}( x_{t+1}, x_t) + \alpha_t \langle \Delta(x_{t}^{md}),x - x_{t+1} \rangle + \frac{M}{q\delta_{t}^r} \frac{\alpha_{t}^q}{A_{t}^{q-1}} \| x_{t+1} -x_t  \|^q   + L A_t \delta_t.
\end{multline*}

Let now $\delta_t^{r} = \frac{2M \alpha_{t}^{q}}{\mu A_{t}^{q-1} \gamma_t}  $. From the Young inequality, we have:

\begin{equation*}
\begin{aligned}
& \alpha_{t} \langle \Delta(x_{t}^{md}),x_t - x_{t+1} \rangle \\
\leq & \frac{ \|\alpha_{t}\Delta_{t}(x_{t}^{md})\|_{\ast}^{p} }{p \mu (\gamma_t -\frac{M}{\mu \delta_{t}^r} \frac{\alpha_{t}^{q}}{A_{t}^{q-1}})^{p/q}}  + \frac{\mu \gamma_t}{q}\| x_t -x_{t+1}\|^q -  \frac{M}{q\delta_{t}^r} \frac{\alpha_{t}^q}{A_{t}^{q-1}} \| x_{t+1} -x_t \|^q \\
=& \frac{\| \alpha_t \Delta_{t}(x_t)\|_{\ast}^{p}}{p  (\frac{\mu \gamma_t}{2})^{p/q}}  + \frac{\mu \gamma_t}{q}\| x_t -x_{t+1}\|^q -  \frac{M}{q\delta_{t}^r} \frac{\alpha_{t}^q}{A_{t}^{q-1}} \| x_{t+1} -x_t \|^q .
\end{aligned}
\end{equation*}

Combining everything, we have: 
\begin{equation}
\begin{array}{rcl}
& A_t \left[\Psi(x_{t+1}^{ag}) - \Psi(x)\right] -  A_{t-1} \left[ \Psi(x_{t}^{ag}) - \Psi(x) \right]   \\
\leq &  \gamma_t [D^{H}(x, x_{t}) - D^{H}(x,x_{t+1})]  + \frac{2\| \Delta_{t}(x_t)\|_{\ast}^{p}}{p \mu^{p/q}} \left(\frac{\alpha_{t}^{q}}{\gamma_t}\right)^{p/q} \\
&-   \alpha_t D^{H}(x,x_{t+1}) + \alpha_t \langle \Delta(x_{t}^{md}) , x -x_{t}  \rangle + L A_t \left(\frac{2M \alpha_{t}^{q}}{\mu A_{t}^{q-1} \gamma_t}\right)^{\frac{1}{r}}.
\end{array}
\end{equation}

Adding all of our terms from $t=1$ to $t=T$, 
we obtain
\begin{align*}
&  A_T \left[\Psi(x_{T+1}^{ag})- \Psi(x)\right]  +  \gamma_{T} D^{H}(x, x_{T+1}) \\
\leq &  \gamma_1 D^{H}(x,x_{1}) + \sum_{t=1}^{T} \alpha_t \langle \Delta(x_{t}^{md}) , x -x_{t}  \rangle + \sum_{t=1}^{T} \frac{2\| \Delta_{t}(x_t)\|_{\ast}^{p}}{p \mu^{p/q}}  \left(\frac{\alpha_{t}^{q}}{\gamma_t}\right)^{p/q}  \\
+ & \sum_{t=1}^{T} L A_t \left(\frac{2M \alpha_{t}^{q}}{\mu A_{t}^{q-1} \gamma_t}\right)^{\frac{1}{r}} + \sum_{t=1}^{T-1} \left( \gamma_{t+1} - \gamma_{t}-\alpha_t \right) D^{H}(x,x_{t+1}).
\end{align*}

Due to the step-size schedule \ref{configuration:stepsize_acc}, the last term is nonpositive, thus upper bounding it by zero proves the result.
\end{proof}

\paragraph{Polynomial step-sizes under Schedule \ref{configuration:stepsize_acc}} As before, there is a family of step-sizes that increases polynomially with $t$:
\begin{equation}\label{ACSMD_stepsize}
    \begin{aligned}
    \begin{cases}
     m & = \max(\frac{q}{r} - 2, \frac{2-q}{q-1}) \\
     \alpha_t & = (t+ [2(m+1)\frac{M}{\mu}]^{1/q} +1)^m \\
     \gamma_t & = \frac{1}{m+1}(t+  [2(m+1)\frac{M}{\mu}]^{1/q} )^{m+1}. \\
    \end{cases}
    \end{aligned}
\end{equation}

With these parameters, we can state an expected excess risk bound for our accelerated algorithm as follows. 

\begin{theorem} \label{thm:ACSMD}
Under Assumption \ref{assump:gradient_noise} and choosing step-sizes as in \eqref{ACSMD_stepsize}, if $q > \kappa$, we have for all $T \geq 1$:
\begin{equation*}
\mathbb{E} \left[\Psi(x_{T+1}^{ag}) - \Psi^{\star} + D^{H}(x_{\star},x_{T+1}) \right] \leq O_{q,\kappa} \left( \frac{L^{\frac{m+1}{q}} V_0 }{(\mu^{\frac{1}{q}} T)^{m+1}}  +   \left(\frac{L}{\mu}\right)^{\frac{1}{r}} \frac{L}{T^{\frac{q-r}{r}}} + \frac{\sigma^{p}}{(\mu T)^{\frac{p}{q}}} \right),
\end{equation*}
where $V_0 = D^{H}(x_{\star},x_1)$ and $O_{q,\kappa}$ omits absolute constants that depend on $q,\kappa$. Moreover, if $\mathcal{X}$ is bounded with  diameter $R$ and if we note $\mathfrak{M}_{T}(\Psi)$ the previous upper bound for $\Psi$, under Assumption \ref{assumption:mgf_surrogate}, we have for all $\Omega >0$:
\begin{equation} \label{eqn:Accelerated}
\begin{aligned} 
&\mathbb{P}\left[\Psi(x_{T+1}^{ag}) - \Psi^{\star} \geq \mathfrak{M}_{T}(\Psi) + O_{q,\kappa} \left(\frac{\Omega \sigma R}{\sqrt{T}} + \frac{\Omega \sigma^p}{(\mu T)^{p/q}} \right)\right]\\
\leq &  \begin{cases}
\exp(-\Omega) + \exp\Big\{-\frac{1}{4}   \Omega^2 \Big\} \quad & \textit{if} \quad  \Omega \lesssim \sqrt{T} \\
\exp(-\Omega) + \exp\Big\{-\frac{1}{p}    (T^{\frac12-\frac1p}\Omega)^p \Big\} \quad & \textit{if} \quad \Omega \gtrsim \sqrt{T}. 
\end{cases}
\end{aligned}
\end{equation}
with $O_{q,\kappa}$ omits another absolute constant that depend on $q,\kappa$.
\end{theorem}

\begin{remark}
Similarly to Theorem \ref{thm:NACSMD}, if $q=\kappa=2$, then the inexact gradient trick is unnecessary, and with minor adaptations to the proof, the rate below follows
 \begin{equation}
\mathbb{E} \left[\Psi(x_{T+1}^{ag}) - \Psi^{\star} + D^{H}(x_{\star},x_{T+1}) \right] \leq O 
\left(\frac{L}{\mu}\frac{V_0 }{T^{2}} + \frac{\sigma^{2}}{\mu T} \right) =:\mathfrak{M}_{T}'(\Psi).
\end{equation}
For the high probability bound, an analog of eqn.~\eqref{eqn:Accelerated} holds, where $\mathfrak{M}_{T}(\Psi)$ is replaced by $\mathfrak{M}_{T}'(\Psi)$. The details are left to the reader.
\end{remark}

The in-expectation guarantee follows directly from Theorem \ref{thm:ACSMD_general}, hence its proof is omitted. 
The proof of the concentration bound is deferred to Appendix \ref{Proof Concentration}. Moreover, we combine the previous result with the restarting algorithm (Algorithm \ref{Restarting Algorithm}) to reduce the initialization error with $V_0 = D^{H}(x_{\star},x_1)$. As before, that includes new coefficients for the initialisation term $ K_1 \propto (L/\mu)^{\frac{n+1}{q}}, \alpha_1 = n+1 $ and geometric gap $K_2 \propto \Big(\frac{L}{\mu}\Big)^\frac1r, \alpha_2 = \frac{q-r}{r}$, but the stochastic terms stay the same $K_3 Z_t \propto \frac{\|\Delta(x_t^{md})\|_\ast^p}{\mu^{p/q}}, K_4 W_t \propto \alpha_t \langle \Delta(x_t^{md}), x_{\star} - x_t\rangle , \alpha_3 = p/q, \alpha_4 = 1 $; this reflects the inherent nature of this term in the accuracy. The final complexity for the in-expectation guarantee becomes:
\[
O_{q,\kappa} \left(\left(\frac{L}{\mu}\right)^{\frac{1}{q}} \log\left(\frac{ V_{0}}{\epsilon}\right) + \left(\frac{L}{\mu}\right)^{\frac{\kappa}{q\kappa+q-\kappa}} \left(\frac{L}{\epsilon}\right)^{\frac{q-\kappa}{q\kappa-q+\kappa}} + \frac{\sigma}{\mu} \left(\frac{\sigma}{\epsilon}\right)^{q-1}\right).
\]

And the additional high probability (i.e. the gap not exceeding more than $2\epsilon$ with probability $1-\delta$) cost for $\delta$ small enough is :

\[
O_{q,\kappa}\left( \Big(\frac{\ln(2/\delta) \sigma R}{\epsilon} \Big)^2+ \frac{\sigma}{\mu} \left(\frac{\sigma}{\epsilon} \ln(2/\delta) \right)^{q-1}\right).
\]

We notice that even if we apply our algorithm in the deterministic case $\sigma = 0$, the convergence rate is sharper than the one in \cite{DG21}. And we will see in the next subsection that our current stochastic bound $\frac{\sigma}{\mu} \left(\frac{\sigma}{\epsilon}\right)^{q-1}$ is optimal when we only have an estimator of the gradient with $p$-th finite moment. 

\section{Lower Complexity Bounds}

To show the near-optimality of our algorithms, we provide matching lower bounds in all parameter settings. These lower bounds are obtained by combining (deterministic) oracle complexity bounds for complementary composite minimization from past work, together with lower bounds applicable to stochastic oracles that satisfy Assumption \ref{assump:gradient_noise}. Due to space limitations, we do not provide a detailed description of the oracle model, recommending as references \cite{NY:1983,Nemirovski:1995}. In general terms, this model captures the interaction of an algorithm with an instance only through queries (corresponding to feasible solutions) whose oracle answers depend only locally on the function (e.g., its gradient). After this interaction, {\em the algorithm must commit to a candidate solution} (this choice can even be randomized, but it must be based exclusively on the information collected so far), and the efficiency of the method is determined by the number of queries it performs. Further, the algorithm must provide an output (which either in expectation or with high probability) has suboptimality gap at most $\varepsilon>0.$

\subsection{Deterministic Lower Complexity Bound}

The oracle complexity of deterministic complementary composite minimization was first studied in \cite{DG21}. We refer the reader to this work for a more precise description of the oracle model, which -- in a nutshell -- assumes exact first-order oracle access to $F$ and full access to $H$.

\begin{theorem}[\cite{DG21}] \label{thm:deterministic_LB}
Consider the space $\ell_q^d=(\mathbb{R},\|\cdot\|_q)$, where $2\leq q <\infty$. Then, the oracle complexity of complementary composite minimization problems where $F$ is $(L,\kappa)$-weakly smooth, $H$ is $(\mu,q)$-uniformly convex and ${\cal X}$ has $\|\cdot\|$-diameter at least $D$, is lower bounded by:

\[
\left\{
\begin{array}{lcl}
\sqrt{\frac{L}{2\mu}}-7 & & \mbox{ if } q=\kappa=2, \varepsilon <2\sqrt{2\mu L}D^2 \\
\frac{C}{\min\{q,\ln d\}^{2(\kappa-1)}}\Big( \big(\frac{L}{\mu}\big)^{\kappa} \big(\frac{L}{\varepsilon}\big)^{q-\kappa} \Big)^{\frac{1}{\kappa q+\kappa-q}} & & \mbox{ if } 1\leq \kappa <q, 2\leq q\leq \infty,\mbox{ and } \mu\geq \tilde\mu,
\end{array}
\right.
\]

where $C=\Big(\big(\frac{q-1}{q}\big)^{\kappa(q-1)}2^{\frac{(q-\kappa)(1-2q)+(\kappa-1)q(2q-3)}{(q-1)}}\Big)^{\frac{1}{\kappa q+\kappa-q}}$ is bounded below by an absolute constant, and
\[ \tilde\mu = C^{\prime} \max\left\{ \min\{q,\ln d\}^3\Big(\frac{\varepsilon^{\kappa}}{LD}\Big)^{\frac{1}{\kappa-1}}, \min\{q,\ln d\}^5\Big( \frac{\varepsilon^q}{L^{(q+1)}D^{\frac{(q-1)(\kappa q+\kappa-q)}{(\kappa-1)}}} \Big)^{\frac{\kappa-1}{\kappa q+1-q}} \right\}, \]
where $C>0$ is a universal constant.
\end{theorem}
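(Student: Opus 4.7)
The plan is to follow the now-standard resisting oracle paradigm of Nemirovski and Yudin, adapted to the complementary composite setting. Since the statement is a lower bound against all deterministic first-order algorithms, the argument will be non-constructive: I will build a hard family of instances and then show that a coordinate-activation mechanism prevents any such algorithm from making progress faster than the claimed rate.

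First, I would construct the hard instance regime-by-regime. For $q=\kappa=2$ I would take $F$ to be a rescaled Nesterov tridiagonal quadratic on $\mathbb{R}^d$ --- giving $(L,2)$-smoothness in $\ell_2$ --- and $H(x)=\tfrac{\mu}{2}\|x\|_2^2$. For the general case $2\leq q<\infty$, $1\leq \kappa<q$, I would replace this by $F(x)=\frac{L}{\kappa}\|Ax-b\|_q^{\kappa}$ (or an analogous construction that is $(L,\kappa)$-weakly smooth with respect to $\|\cdot\|_q$), together with $H(x)=\tfrac{\mu}{q}\|x\|_q^q$, which is $(\mu,q)$-uniformly convex in $\|\cdot\|_q$ by the example given above. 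Crucially, the chain-structured operator $A$ must be such that each new coordinate of the solution can only be ``unlocked'' by an additional gradient query to $F$.

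Next, I would carry out the information step: starting from $x_1=0$, show by induction that any iterate produced by $T$ first-order queries to $F$ lies in the span of the first $T+1$ coordinates. The zero-chain property of $A$ handles this for $F$; the key observation is that giving the algorithm full access to $H$ does not enlarge the reachable subspace, because $H$ is coordinate-separable and $x_1=0$ is a zero of its gradient. From this, via the KKT conditions for the composite problem, I would derive an explicit lower bound on the tail $\sum_{j>T+1}|x_\star(j)|^q$, which the $(\mu,q)$-uniform convexity of $H$ converts into $\Psi(x_{T+1})-\Psi^\star\geq \tfrac{\mu}{q}\|x_{T+1}-x_\star\|_q^q$. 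Inverting this bound at accuracy $\varepsilon$ then produces the claimed complexities, with the thresholds on $\varepsilon$ and $\mu\geq\tilde\mu$ simply recording the regime in which the hard instance is non-degenerate (i.e., the quadratic/chain term dominates the regularizer over the active coordinates).

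The main obstacle will be controlling the geometry in the $\ell_q$ case. Unlike $\ell_2$, the ``right'' chain operator on $\ell_q$ is not canonical, and bounding its $\ell_q\to\ell_{q^\ast}$ operator norm requires either careful H\"older-type arguments or an interpolation between $\ell_2$ and $\ell_\infty$ constructions; the $\min\{q,\ln d\}^{2(\kappa-1)}$ and the constant $C$ in the bound trace back exactly to this step. A second delicate point is that in the weakly smooth regime $\kappa<q$ a quadratic chain does not produce the right homogeneity, so an additional nonlinearity must be inserted while simultaneously preserving $(L,\kappa)$-weak smoothness in $\|\cdot\|_q$ and the zero-chain property used in the induction --- this is the portion of the argument where most of the technical work of \cite{DG21} concentrates, and where I expect the proof to require the most care.
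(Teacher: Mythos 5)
This theorem is imported into the paper by citation to \cite{DG21}; the paper offers no proof of its own, so there is nothing internal to compare you against. Judged as a reconstruction of the argument in \cite{DG21}, your sketch correctly identifies the governing paradigm: a Nemirovski--Yudin resisting oracle with a zero-chain hard instance, the observation that full access to a coordinate-separable $H$ cannot enlarge the reachable subspace for a zero-respecting algorithm, and the conversion of a tail bound on $x_\star$ into a suboptimality gap via the $(\mu,q)$-uniform convexity of $H$. These are indeed the structural pillars of the \cite{DG21} lower bound.

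The place where your sketch drifts from the actual construction — and where, if you pushed it through, you would likely get stuck — is the choice $F(x)=\frac{L}{\kappa}\|Ax-b\|_q^{\kappa}$. First, for a chain operator $A$ this function is not $(L,\kappa)$-weakly smooth w.r.t.~$\|\cdot\|_q$ for general $\kappa\in(1,2]$ and $q\geq 2$ without substantial extra work: H\"older continuity of $\nabla(\|\cdot\|_q^\kappa)$ w.r.t.~the $q$-norm is delicate when $\kappa<2<q$, and the relevant constant picks up operator-norm factors of $A$ that are not straightforward to isolate. Second, and more importantly, the hard instances in \cite{DG21} (following Guzm\'an--Nemirovski) are built from smoothings of max-type piecewise-linear functions, not from $\ell_q$-norm powers of an affine map. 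That construction is what simultaneously delivers (i) a tunable Hölder exponent $\kappa$, (ii) the zero-chain/resisting-oracle property, and (iii) the precise $\min\{q,\ln d\}$-dependent geometry factors you correctly flag as the source of $C$ and the $\min\{q,\ln d\}^{2(\kappa-1)}$ prefactor. Your own sketch honestly flags this portion as the hard part; that is exactly where the two routes diverge, and where the norm-power construction would need to be replaced wholesale rather than patched.
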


This result is applicable to all settings of $2\leq q<\infty$ and $1<\kappa\leq 2$, for arbitrary choices of $L>0$, however the lower bound only applies to sufficiently large values of $\mu$. This limitation is inherent, as when the uniform convexity parameter becomes sufficiently small better complexity rates are obtained by non-uniformly convex stochastic optimization. Note moreover that the sufficiently large condition -- given by $\tilde\mu$ --  scales polynomially with the target accuracy, hence the restriction is mild.

In the case $q=\kappa=2$, the lower bound of the theorem, $\Omega(\sqrt{L/\mu})$, is nearly tight in the case $\sigma=0$, due to the upper bound $O(\sqrt{L\log(V_0/\epsilon)/\mu})$; an analogous rate was obtained in \cite{DG21}. We defer the case $\sigma>0$ to the next subsection. On the other hand, when $\kappa<q$ we obtain a polynomial lower bound $\tilde\Omega\big(\big(\big(\frac{L}{\mu}\big)^{\kappa} \big(\frac{L}{\varepsilon}\big)^{q-\kappa} \big)^{\frac{1}{\kappa q+\kappa-q}}\big)$. Our upper bound in this setting when $\sigma=0$ is $O\big(\big(\frac{L}{\mu}\big)^{1/q}\log\big(\frac{V_0}{\epsilon}\big)+\big(\big(\frac{L}{\mu}\big)^{\kappa}\big(\frac{L}{\epsilon}\big)^{q-\kappa}\big)^{\frac{1}{q\kappa-q+\kappa}} \big)$, hence the lower bound is nearly-optimal up to a poly-logarithmic additive term. Regarding this gap, we remark that our result is a refinement of results in \cite{DG21}, where the logarithmic term appears multiplicatively in the second term of our upper bound.

\subsection{Stochastic Lower Complexity Bound}

Our stochastic lower bound is inspired by a very classical argument \cite{NY:1983}, which we extend to the uniformly convex setting, as well as extending to arbitrary moment parameter $\sigma>0$. 

\begin{theorem}\label{thm:stoch_LB}
Consider the class of problems \eqref{eqn:comp_min} where $2\leq q<\infty$, $1<\kappa\leq 2$, and $\varepsilon,\mu, \sigma>0$ satisfying
\begin{equation} \label{eqn:assump_stoch_LB}
\varepsilon \leq \frac{1}{2p}\frac{\sigma^p}{\mu^{p-1}}.
\end{equation}
Under Assumptions \ref{assump:gradient_noise} and \ref{assump:subproblem}, any algorithm for this problem class is such that, for any $0<\gamma<1$, with probability $1-\gamma$ it fails on achieving accuracy $\varepsilon$ after \[T\leq\frac{1}{2p^{q-1}}\frac{\sigma}{\mu}\Big(\frac{\sigma}{\epsilon}\Big)^{q-1}\ln\big(\frac{1}{1-\gamma}\big).\]
many queries to a certain stochastic first-order oracle.
\end{theorem}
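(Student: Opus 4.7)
The plan is to invoke the classical \emph{resisting-oracle} argument, adapted to the uniformly convex complementary setting. First I would construct a symmetric pair of one-dimensional hard instances parametrized by a sign, second I would design a Bernoulli stochastic oracle that reveals the sign only with a tiny probability per query while respecting the $p$-th moment constraint of Assumption \ref{assump:gradient_noise}, and finally I would apply a coupling argument showing that if $T$ is too small then with probability at least $1-\gamma$ no query reveals anything and the algorithm cannot distinguish the two instances.

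For the instance construction, I would take ${\cal X}=\mathbb{R}$, $F_s(x)=sgx$ for $s\in\{-1,+1\}$, and $H(x)=\frac{\mu}{q}|x|^q$; here $F_s$ is trivially $(L,\kappa)$-weakly smooth (even with $L=0$), $H$ is $(\mu,q)$-uniformly convex, and Assumption \ref{assump:subproblem} is trivially satisfied. A direct calculation, exploiting the dual identities $p=q/(q-1)$ and $(p-1)(q-1)=1$, yields $x^\star_s=-s(|g|/\mu)^{1/(q-1)}$ and $\Psi_s(x^\star_s)=-\frac{1}{p}|g|^p/\mu^{p-1}$, so that for any output $\hat x$ the adversarial sign $s=\mathrm{sign}(\hat x)$ makes $sg\hat x+\frac{\mu}{q}|\hat x|^q\geq 0$ and hence $\Psi_s(\hat x)-\Psi_s(x^\star_s)\geq \frac{1}{p}|g|^p/\mu^{p-1}$. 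Calibrating $|g|^p\sim p\epsilon\mu^{p-1}$ makes this irreducible gap of order $\epsilon$, and hypothesis \eqref{eqn:assump_stoch_LB} is precisely engineered to guarantee $|g|^p\leq \sigma^p/2$. The oracle then returns $G=sg/\rho$ with probability $\rho$ and $G=0$ otherwise, which is unbiased; its $p$-th moment equals $|g|^p[(1-\rho)^p/\rho^{p-1}+(1-\rho)]$, and taking $\rho$ of order $(|g|/\sigma)^q=(p\epsilon)^{q-1}\mu/\sigma^q$ (using the exponent identity $q(p-1)=p$) is what forces this quantity to stay below $\sigma^p$.

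The reduction is then standard: by a coupling argument, the event $\mathcal{E}$ that all $T$ queries return $0$ has probability at least $(1-\rho)^T$, and on $\mathcal{E}$ the joint law of the algorithm's transcript (hence of its output $\hat x$) is identical under $s=+1$ and $s=-1$, regardless of the algorithm being randomized and adaptive; an adversarial choice of $s$ then forces the gap to be at least $\epsilon$. Requiring $(1-\rho)^T\geq 1-\gamma$ and using $-\ln(1-\rho)\leq 2\rho$ for small $\rho$ yields $T\leq \ln(1/(1-\gamma))/(2\rho)$, and substituting $\rho$ gives the announced bound. The main technical obstacle is the careful accounting of constants in the moment computation: the naive estimate $(1-\rho)^p\leq 1$ loses a factor of $2^{q-1}$, so recovering the claimed $\frac{1}{2p^{q-1}}$ prefactor requires a sharper two-term balance of $(1-\rho)^p/\rho^{p-1}$ against $(1-\rho)|g|^p$, exploiting the slack $|g|^p\leq \sigma^p/(2p)$ provided by assumption \eqref{eqn:assump_stoch_LB} rather than the crude $|g|^p\leq \sigma^p/2$.
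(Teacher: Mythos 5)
Your proposal is correct and takes essentially the same approach as the paper: the same one-dimensional instances $\nu g x + \frac{\mu}{q}|x|^q$ with a rescaled Bernoulli oracle (returning $\nu g/\rho$ with probability $\rho$, else $0$) that is unbiased and calibrated so its centered $p$-th moment stays below $\sigma^p$, together with the same indistinguishability argument that, on the high-probability event that all $T$ queries return $0$, the transcript carries no information about $\nu$. The only presentational differences are that you lower-bound the gap at an arbitrary output via an adversarial choice of sign while the paper argues $\varepsilon$-level-set disjointness via the value at the origin, and you bound $-\ln(1-\rho)\leq 2\rho$ where the paper uses the slightly tighter $-\ln(1-\rho)\leq \rho/(1-\rho)$.
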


First, we give an overview of the approach. We will consider a 1-dimensional instance of the form $F_{\nu}(x)=\mathbb{E}_b[b\nu Cx]$, and $H(x)=(\mu/q)|x|^q$, where  
$\nu=\pm1$ can be adversarially selected, and $b$ is a binary random variable that takes the value 0 with probability $1-s$ and the value $1/s$ with probability $s$; we will choose $s$ very small. Notice that $\nu$ tilts the optimal solution to the right or left of the origin, and we will show that in fact learning $\nu$ is necessary and sufficient to accurately minimize the composite objective. The key idea is that the oracle in this case corresponds to samples from $b$, which only rarely provide $b\neq 0$; in this case, we are unable to learn the parameter $\nu$, and essentially cannot make any progress. Hence, controlling the probability that $T$ samples from $b$ are all zero suffice to assert that the algorithm is unlikely to succeed in terms of objective function value.

\begin{proof}
Let $0<s<1$ and $b$ be a random variable that takes value 0 w.p.~$1-s$ and value $1/s$ w.p.~$s$. Clearly, $\mathbb{E}[b]=1$. Given $\nu\in\pm1$ and $C>0$ to be determined, consider the following functions:
\begin{equation*}
f_{\nu}(x,b) =\nu b Cx \qquad
H(x) = \frac{\mu}{q}|x|^q.
\end{equation*}

Thus, the objective $F_{\nu}(x)=\mathbb{E}_b[f_{\nu}(x,b)] +H(x)$ satisfies the complementary composite structure: namely, the objective is composed by a $(L,\kappa)$-smooth function (where in fact $L=0$) plus a $(\mu,q)$-uniformly convex function.

In what follows, we will determine values of $s$ and $C$ such that the $\varepsilon$-level sets of $F_{+1}$ and $F_{-1}$ are disjoint, and that Assumption \ref{assump:gradient_noise} is satisfied

\begin{itemize}
    \item {\em Level set disjointness:} By the optimality conditions for $F_{\nu}$, it is easy to see the optimal solution is $x_{\nu}^{\ast}:=-\nu\big( \frac{C}{\mu}\big)^{\frac{1}{q-1}}$, and hence 
    \[ F_{\nu}(0)-F_{\nu}(x_{\nu}^{\ast})=\frac{1}{p}\Big(\frac{C^q}{\mu}\Big)^{\frac{1}{q-1}}. \]
    Hence, the condition below suffices to have the property that the $\varepsilon$-level sets of $F_1$ and $F_{-1}$ are disjoint
     \begin{equation} \label{eqn:lev_set_disjoint} 
     C= \mu^{1/q}(\varepsilon p)^{1/p}. 
     \end{equation}
    \item {\em Centered moment bounds.} Let us compute the $p$-th moment of our stochastic oracle:
    \begin{multline*}
    \mathbb{E}_b\|\nabla f_{\nu}(x,b)-\nabla F_{\nu}(x)\|_{\ast}^p \\
    = (1-s)C^p+s\big(\frac{C}{s}-C\big)^p =C^p(1-s)\Big(1+\big(\frac{1-s}{s}\big)^{p-1} \Big).
    \end{multline*}
    We want to impose that this moment is upper bounded by $\sigma^p$, which is equivalent to:
    \[\frac{1-s}{s} \leq \Big(\Big( \frac{\sigma}{C} \Big)^p \frac{1}{1-s}-1 \Big)^{\frac{1}{p-1}}.\]
    Notice that by \eqref{eqn:assump_stoch_LB}, we have that $(\sigma/C)^p\geq 2$, which in turn implies $\frac{(\sigma/C)^p}{1-s}-1\geq \frac{(\sigma/C)^p}{2(1-s)}$, so it suffices that
    
    \begin{eqnarray*}
    \frac{1-s}{s} \leq \Big(\frac12\Big( \frac{\sigma}{C} \Big)^p \frac{1}{1-s}\Big)^{\frac{1}{p-1}}
    \quad\Leftrightarrow\quad
    \frac{s^{p-1}}{(1-s)^p}\geq \Big( \frac{2p\mu^{p-1}\varepsilon}{\sigma^p} \Big)^{\frac{1}{p-1}}.
    \end{eqnarray*}
    Finally, notice the left hand side is a monotonically increasing function of $s$, and that when $s\to 0$ it converges to 0, whereas when $s\to 1$ it diverges to $+\infty$. Hence, there exists a unique choice of $s$ such that equality is satisfied. From now on, we make this choice of $s$.
\end{itemize}
The proof is concluded by noticing that the probability that $T$ samples $b_1,\ldots, b_T$ provide $\nabla f_{\nu}(x,b_t)=0$ is $(1-s)^T$. Notice that under this event, the algorithm has collected no information about $\nu$. Therefore, if we choose $\nu=\pm1$ uniformly at random, the expected suboptimality of the algorithm will be at least $\varepsilon$. We finally lower bound the probability of the event above, for which we use the elementary inequality $\ln(1-x)\geq -\frac{x}{1-x}$, for $0<x<1$:
\begin{eqnarray*}
(1-s)^T&=&\exp\big\{T\ln(1-s) \big\} \geq \exp\big\{-\frac{sT}{1-s} \big\}.
\end{eqnarray*}
Notice now that for our choice of $s$, we have:
\[ \frac{1-s}{s} = \Big(\frac12\Big( \frac{\sigma}{C} \Big)^p \frac{1}{1-s}\Big)^{\frac{1}{p-1}} \geq \Big(\frac12\Big( \frac{\sigma}{C} \Big)^p \Big)^{\frac{1}{p-1}} =\frac{1}{2^{q/p}}p^{q-1}\frac{\sigma^q}{\mu \varepsilon^{q-1}}.\]
Finally, making the choice of $T=\frac{1}{2^{q/p}p^{q-1}}\frac{\sigma^q}{\mu \varepsilon^{q-1}}\ln\big(\frac{1}{1-\gamma}\big)$ shows the probability is at least $1-\gamma$, proving the result.
\end{proof}

To conclude this Section, we briefly discuss the consequences of the separate lower bounds proved in Theorems \ref{thm:deterministic_LB} and \ref{thm:stoch_LB}; in particular: Do they imply a lower bound given by the sum of the two? The answer is yes (possibly with a degradation by an absolute constant factor), and the argument is the following: Consider an adversary which first tosses a fair coin, and based on its outcome it selects either the family of instances from the proof of Theorem \ref{thm:deterministic_LB}, or alternatively it selects the family of instances from the proof of Theorem \ref{thm:stoch_LB}. Then, for any algorithm, its expected running time against this random instance must be proportional to the sum of the two lower bounds. Furthermore, if the algorithm is deterministic, then we can derandomize this choice, making the lower bound to hold with high-probability.

\section{Numerical results}

We apply our methods to the generalized ridge regression problem, as presented in eqn.~\eqref{eqn:bridge_reg},
\begin{equation}
\min_{x\in\mathbb{R}^d} \mathbb{E}_{(a,b)}[(a^{\top} x-b)^2]+\mu\|x\|_q^q.
 \end{equation}

We generate synthetic data from a uniform distribution $\mathcal{U}$ and Gaussian noise:
\begin{equation}
    \begin{cases}
    a  \propto 
    \mathcal{U}([-1,1]^d) \\
    b =a^{\top}x_{\star}+\xi, \qquad \xi \propto \mathcal{N}(0,\sigma_b).
    \end{cases}
\end{equation}

We know that the loss function $F$ is smooth and strongly convex with respect to the $\ell_q$ norm, but the condition number 
depends on the dimension: 

\begin{equation*}
    \begin{cases}
    \nabla F(x) & = - 2  \mathbb{E}_{(a,b)} \left[ (a^{\top}(x_\star - x) + \xi ) a \right] =\frac{2}{3} (x - x_\star) \\
    D^F(x_\star,x) & = \mathbb{E}_{(a,b)} \left[ (a^{\top}(x - x_\star))^2 \right] = \frac{1}{3}\| x - x_\star \|_{2}^2.  
    \end{cases}
\end{equation*}
Dimension dependence arises from the following chain of inequalities (each of which can be tight in the worst case):
\[\frac{1}{3}\| x - x_\star \|_{q}^2 \leq  D^F(x_{\star},x) \leq \frac{d^{1 - \frac{2}{q}}}{3}\| x - x_\star \|_{q}^2.  \]
Hence, the condition number of $F$ can be upper bounded by $d^{1-\frac2q}$.

On the other hand, the regularizer is $(\mu,q)$-uniformly convex with respect to the $\ell_q$ norm (see, e.g., \cite{Ball:1994}). To check the performance of the algorithms, we choose the following setting. Below we denote by $\mu_F$ the strong convexity parameter of $F$: 

\begin{equation*}
    \begin{cases}
    L & = \frac{2}{3} d^{1 - \frac{2}{q}}  \\
    \mu_F & = \frac{2}{3} \\
    \sigma & = d^{2/p} \sigma_b^2 + 2 d^2 R^2 
    \end{cases}
\end{equation*}

\begin{figure}[!h]
    \centering
    \includegraphics[width=0.7\linewidth]{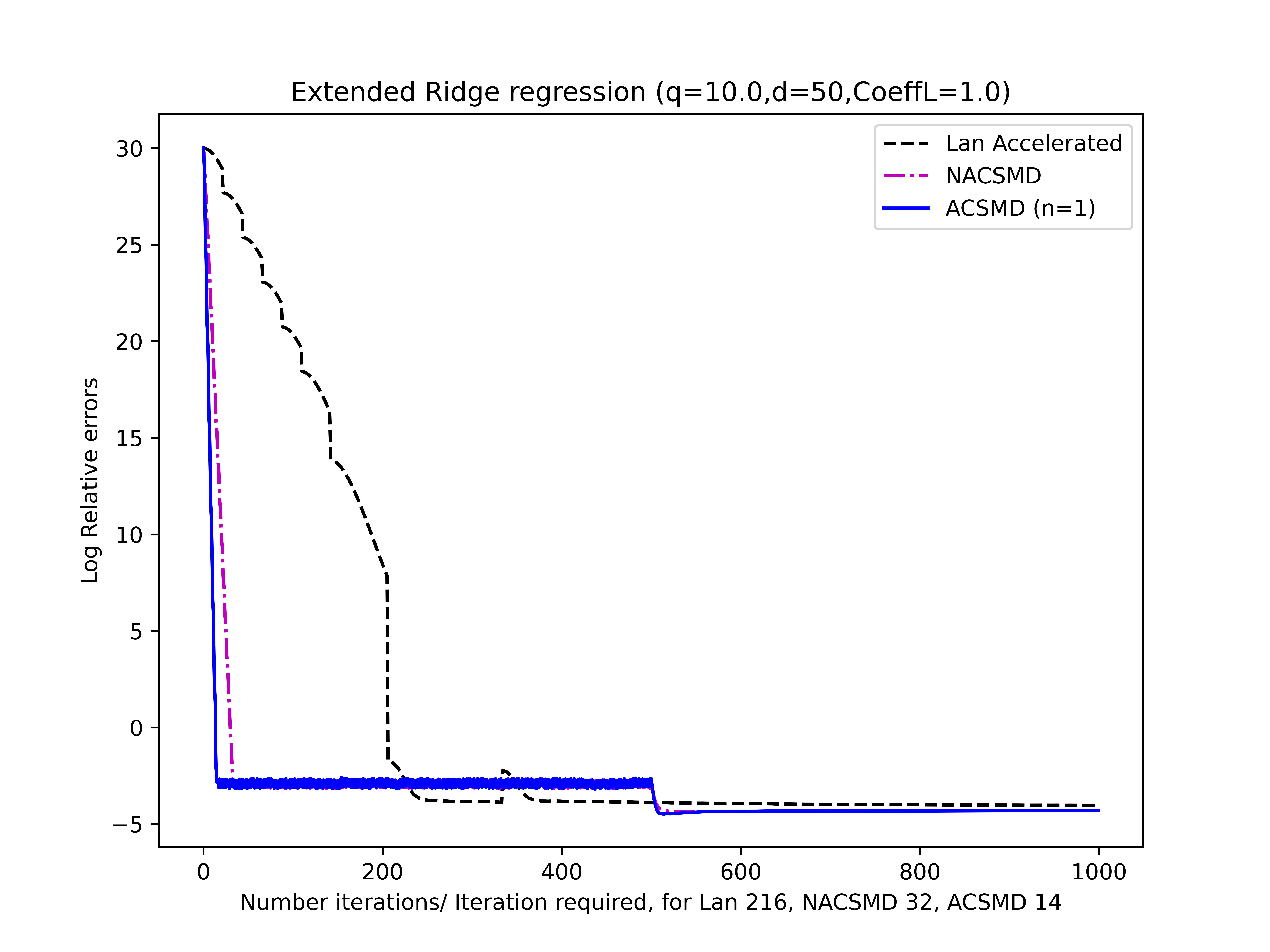}
    \caption{Performance comparison between Algorithms \ref{Algo NACSMD}, \ref{Algo ACSMD} and the one suggested in \cite{GL12II} with the restarting scheme \ref{Restarting Algorithm}. We evaluate the decreasing speed of the log relative error through iterations for an extended Ridge regression problem.}
    \label{fig:simulation_1}
\end{figure}

To better evaluate the performance of different algorithms, we first pick $\sigma_b=0.1$ and $\mu=2$, we run multiple simulations with different parameters of $d$ and $L$, then we measure the needed number of iterations to achieve the relative $\epsilon = 0.01$ precision. For the step size, we choose  $\alpha_t$ as a constant for NACSMD and different polynomials for ACSMD.

\begin{table}[ht]
    \centering
    \begin{tabular}{|c | c | c | c | c | c |} 
     \hline
     Iteration required & Lan & NACSMD & ACSMD1 & ACSMD2 & ACSMD3\\ [0.5ex] 
     \hline\hline
     $d$=20 &  91 & 81  & 32  & 20  & 14 \\ 
     \hline
     $d$=50 & 110 & 66 & 26 & 16 & 12\\
     \hline
     $d$=100 & 145  & 82  & 33  & 21  & 15  \\
     \hline
     $d$=200  & 138  & 76  & 26  &  17 & 12  \\ 
     \hline
    \end{tabular}
    \caption{Simulation results with different dimensions.}
    \label{table_d}
\end{table}

\begin{table}[ht]
    \centering
    \begin{tabular}{|c | c | c | c | c | c |} 
     \hline
     Iteration required & Lan & NACSMD & ACSMD1 & ACSMD2 & ACSMD3 \\ [0.5ex] 
     \hline\hline
     $L$ & 110 & 66 & 26 & 16 & 12 \\ 
     \hline
     $2L$ & 149  & 84  & 33  & 21  & 15  \\
     \hline
     $5L$ &  114 & 123  & 26  & 16  & 12 \\
     \hline
     $10L$ &  228 & 266  &  28 & 18  & 13 \\
     \hline
     $20L$ & 457  & >999  & 31  & 20  & 14  \\ 
     \hline
    \end{tabular}
    \caption{Simulation results with overestimated parameter $L$ ($d=50$).}
    \label{table_L}
\end{table}

Now we can briefly analyze the results we have obtained. We emphasize three aspects that illustrate the benefits of our accelerated method.

\paragraph{Step-size schedule} First, we observe that our proposed step-size schedule works well in practice as predicted by our theory. In Section \ref{sec:algorithms} 
we have shown the flexibility of the step-sizes provide convergence. Here, we have used different step-sizes by changing the degree of the interpolating polynomial. More precisely, ACSMD1, ACSMD2 and ACSMD3 have their step-sizes chosen as :
\[ \left(\alpha_t^{ACSMD1},\alpha_t^{ACSMD2},\alpha_t^{ACSMD3}\right) = \left( \Big(t + \big(\frac{L}{\mu}\big)^\frac1q\Big),\Big(t + \big(\frac{L}{\mu}\big)^\frac1q\Big)^{2}, \Big(t + \big(\frac{L}{\mu}\big)^\frac1q\Big)^{3}\right), \] and as we can observe in Figure \ref{fig:simulation_2}, all three algorithms have good convergence rate as predicted. This shows that the flexibility of the method also exists in practice.  

\begin{figure}[!h]
    \centering
    \includegraphics[width=0.7\linewidth]{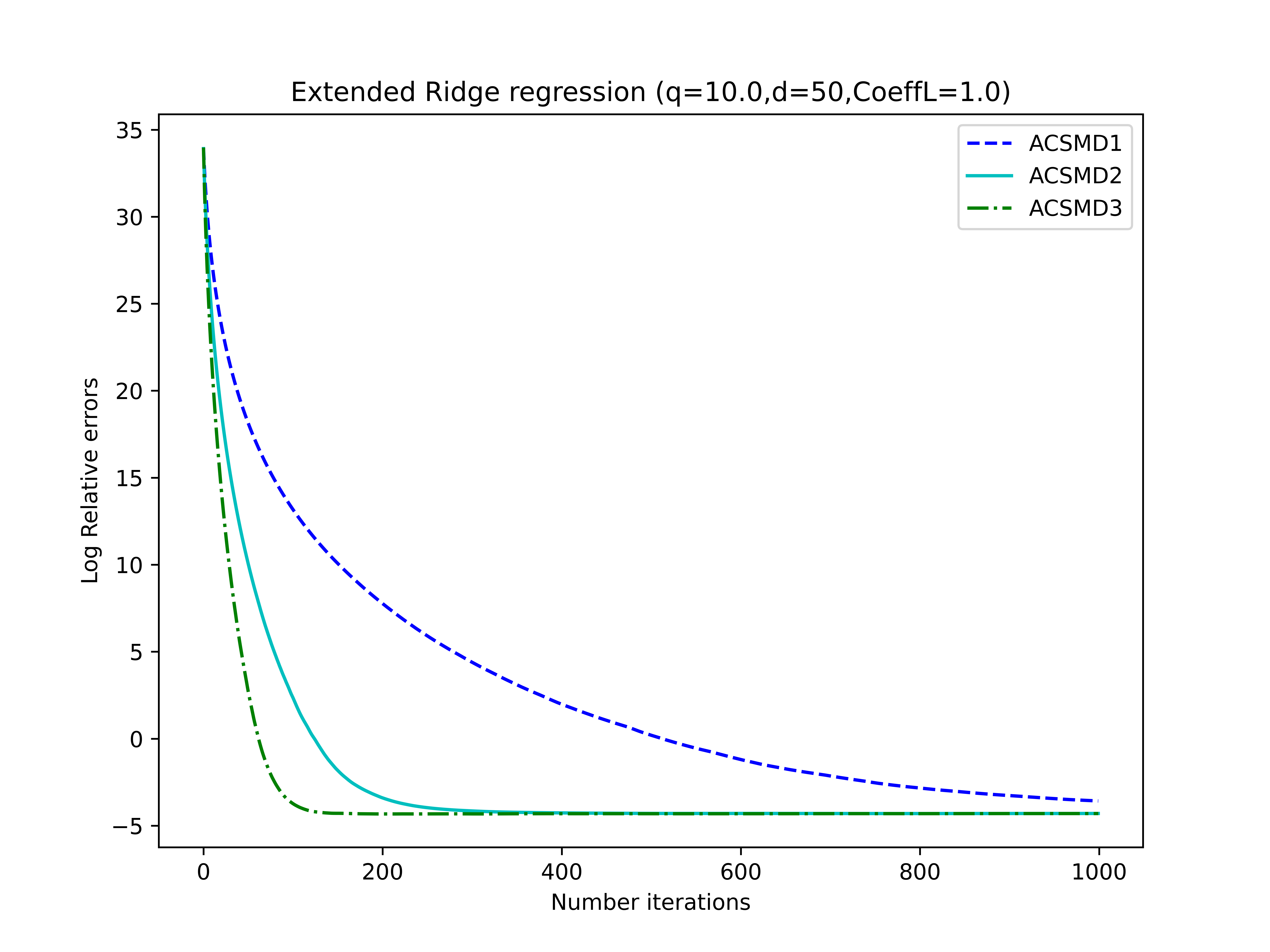}
    \caption{Performance comparison for Algorithm \ref{Algo ACSMD} with different polynomial degrees as described before without the restarting scheme. We evaluate the decreasing speed of the log relative error through iterations for an extended Ridge regression problem.}
    \label{fig:simulation_2}
\end{figure}

\paragraph{Uniformly convex regularizer and condition number} Our algorithms also work for uniformly convex regularizer which is not much covered in the existing literature. In higher dimensional complementary composite setting (Figure \ref{fig:simulation_1}), our algorithms converge faster the ones in \cite{GL12}, designed for the classical composite setting. In the state of the art, for a problem such as \eqref{eqn:bridge_reg}, the condition number ($L/\mu_F$) increases with the dimension, which makes the convergence rate slower. 

We would also like to emphasize the fact that our example is such that $F$ is strongly convex. Notice however that its strong convexity constant will be directly tied to the distributional features of the data, which for ill-posed problems it can lead to arbitrarily small strong convexity parameter. 

On the other hand, even if this quantity is finite, there might not be any available bound on it, because determining this parameter is computationally difficult.
In contrast, our complementary composite setting will enjoy a bounded effective condition number, as long as the regularization term is uniformly convex; moreover, its parameters are typically an algorithmic design choice, thus we do not need to estimate them.

\paragraph{Acceleration benefits} Our last experiments, in Table \ref{table_L}, explores the computational cost of over-estimating the parameter $L$. This is a key concern in practical scenarios, where precisely estimating this parameter is difficult (if not impossible). 
An over-estimation of $L$ would lead to an over-estimation of the condition number which would increase the number of iterations considerably. In our experiments, we observed that our accelerated algorithm has the same complexity when over-estimating $L$ up to a factor $20$, which we attribute to the milder dependence of our methods (particularly, with a polynomial root) in terms of the condition number. This phenomenons is also predicted by the theory and illustrates the importance of acceleration.

\section{Acknowledgments}
Research partially supported by an INRIA Associate Teams grant. 
CG's research was partially supported by FONDECYT 1210362 grant and National Center for Artificial Intelligence CENIA FB210017, Basal ANID.

\nocite{*}
\bibliographystyle{plain} 
\bibliography{references}

\appendix

\section{Example of uniform convexity}
\label{uniform convexity example}

We remind the context of the uniform convexity of $f(x)= \|x\|_q^q$. We  would like to show that, for all $x,y \in \mathbb{R}^d$,  $D_f(x,y) \geq \frac{2^{-\frac{q(q-2)}{q-1}}}{q} \| x - y \|_q^q = \frac{2^{-\frac{q(q-2)}{q-1}}}{q} \sum_{i=1}^d |x_i - y_i|^q$.

By the separability on $f(x) = \frac{1}{q} \sum_{j=1}^d |x_j|^q $ and $ \nabla f(x) = (|x_j|^{q-1} \mbox{sgn}(x_j) )_{j\in[d]}$, we only need to show the result in dimension one, which means that, for all $x,y \in \mathbb{R},$
\[ |x|^q - |y|^q - q y |y|^{q-2} (x-y) \geq \frac{2^{-\frac{q(q-2)}{q-1}}}{q}  |x-y|^q. \]
which is proved in \cite[Proposition 3.2]{Zalinescu:1983}.

\section{Analysis of the restarting algorithm}
\label{annexes:proof restart}

\begin{proof} {\em of Lemma \ref{lemma restarting}.}
The proof is composed into two parts, we will first analyze the output in the $n$ rounds of fixed length $K = \lceil (2K_1)^{\frac{1}{\alpha_1}} \rceil$ after each iteration. Then once the initialisation error(related to $K_1$) is considerably reduced, we will analyse the output after the remaining $T$ iterations to show that the complexity costs for other terms (related to $K_2,K_3,K_4$) have at most doubled. 

For the first $n$ iterations rounds, we notice that the assumption in the lemma \ref{lemma restarting} provide a recursive form for the proximal function. If we call $x_{1}^{k}$ the restarting point that we use for the $k$-th round and $x_{K+1}^{k}$ the output:
\begin{equation*}
\begin{aligned}
 D^{H}(x_{\star},x_{K+1}^{k}) & \leq \frac{K_1 D^{H}(x_{\star},x_1^{k})}{K^{\alpha}} + \frac{K_2}{K^{\alpha_2}} + \frac{K_3}{K^{\alpha_3}} \sum_{t=1}^K Z_{t}^{k} + \frac{K_4}{K^{\alpha_4}} \sum_{t=1}^K W_{t}^{k} \\
D^{H}(x_{\star},x_{K+1}^{k}) & \leq \frac{1}{2}D^{H}(x_{\star},x_{1}^{k})  + \frac{K_2}{K^{\alpha_2}} + \frac{K_3}{K^{\alpha_3}} \sum_{t=1}^K Z_{t}^{k} + \frac{K_4}{K^{\alpha_4}} \sum_{t=1}^K W_{t}^{k}  \\
\end{aligned}
\end{equation*}

with $Z_t^k, W_t^k$ random variables appeared in $k$-th round. We realize that the distance to the optimal solution $D^{H}(x_{\star},x_{K+1}^{k})$ has almost been halved compared to our initial distance $D^{H}(x_{\star},x_{1}^{k})$ and we are paying a constant cost related to $K$. In other words, we have a recursion of the form $Y_{k+1} \leq \frac{1}{2} Y_{k} + C$, where $C>0$ is a constant.

Now we only need to remind that the restarting point is the ending of the previous epoch: $x_{1}^{k}= x_{K+1}^{k-1}$. For each round, we are paying the same constant price, but since the scale is halved each time, the sum of them is converging. For example if $n \geq 2$:

\begin{align*}
D^{H}(x_{\star},x_{K+1}^{n}) & \leq \frac{1}{2}D^{H}(x_{\star},x_{1}^{n-1})  + \frac{K_2}{K^{\alpha_2}} + \frac{K_3}{K^{\alpha_3}} \sum_{t=1}^K Z_{t}^{n} + \frac{K_4}{K^{\alpha_4}} \sum_{t=1}^K W_{t}^{n}  \\
 & \leq \frac{1}{2}\Big( \frac{1}{2}D^{H}(x_{\star},x_{1}^{n-2})  + \frac{K_2}{K^{\alpha_2}} + \frac{K_3}{K^{\alpha_3}} \sum_{t=1}^K Z_{t}^{n-1} + \frac{K_4}{K^{\alpha_4}} \sum_{t=1}^K W_{t}^{n-1}  \Big)   \\
 & + \frac{K_2}{K^{\alpha_2}} + \frac{K_3}{K^{\alpha_3}} \sum_{t=1}^K Z_{t}^{n} + \frac{K_4}{K^{\alpha_4}} \sum_{t=1}^K W_{t}^{n} \\
& = \frac{1}{4} D^{H}(x_{\star},x_{1}^{n-2})  + \Big( 1 + \frac{1}{2} \Big) \frac{K_2}{K^{\alpha_2}}    \\
 & + \frac{K_3}{K^{\alpha_3}} \sum_{t=1}^K \Big( \frac{Z_{t}^{n-1}}{2} + Z_{t}^{n} \Big) + \frac{K_4}{K^{\alpha_4}} \sum_{t=1}^K \Big( \frac{W_{t}^{n-1}}{2} + W_{t}^{n} \Big) 
\end{align*}

Thus, we have by induction for $n\geq 1$: 
\begin{align*}
D^{H}(x_{\star},x_{K+1}^{n}) \leq \frac{K_1 D^{H}(x_{\star},x_1)}{2^n} + \frac{K_2}{K^{\alpha_2}} \sum_{i=0}^{n-1} \frac{1}{2^k}
\\ + \frac{K_3}{K^{\alpha_3}} \sum_{k=1}^n \sum_{t=1}^K  \frac{Z_{t}^k}{2^{n-k}} + \frac{K_4}{K^{\alpha_4}} \sum_{k=1}^n \sum_{t=1}^K  \frac{W_{t}^k}{2^{n-k}}.  
\end{align*} 

The remaining part is to run $T \geq K $ iterations with the new starting point $x_{K+1}^{n}$ with $Y_{T+1}$ the final output:
\begin{equation*}
\begin{aligned}
& \Psi(y_{T+1}) - \Psi^{\star} \\
\leq & \frac{K_1 D^{H}(x_{\star},x_{K+1}^{k})}{T^{\alpha_1}}  + \frac{K_2}{T^{\alpha_2}} + \frac{K_3}{T^{\alpha_3}} \sum_{t=1}^T Z_t + \frac{K_4}{T^{\alpha_4}} \sum_{t=1}^T W_t\\
\leq & \frac{D^{H}(x_{\star},x_1)}{2^{n+1}} + \frac{3K_2}{T^{\alpha_2}} + \frac{K_3}{T^{\alpha_3}} \Big( \sum_{t=1}^T  Z_t +  \frac{T^{\alpha_3 -\alpha_1}}{K^{\alpha_3}} \sum_{k=1}^n \sum_{t=1}^K  \frac{Z_{t}^k}{2^{n-k}} \Big) \\
& + \frac{K_4}{T^{\alpha_4}} \Big( \sum_{t=1}^T  W_t +  \frac{T^{\alpha_4 -\alpha_1}}{K^{\alpha_4}} \sum_{k=1}^n \sum_{t=1}^K  \frac{W_{t}^k}{2^{n-k}} \Big)
\end{aligned}
\end{equation*}

\end{proof}

\section{Concentration inequalities}
\label{app:concentration}


\begin{lemma} 
\label{lemma:mgf_surrogate}
Let $W$ be a random variable that satisfies \eqref{eqn:mgf_real}. Then
\[ 
\mathbb{E}\big[\exp\big\{\lambda W \big\}\big] \leq 
\left\{ 
\begin{array}{ll}
 \exp \Big\{(3 \lambda \sigma R )^2  \Big\} & |\lambda| \leq \frac{1}{2\sigma R}\\
\exp\Big\{(3 |\lambda| \sigma R )^q  \Big\} &|\lambda| > \frac{1}{2\sigma R}.
\end{array}
\right.
\] 


\end{lemma}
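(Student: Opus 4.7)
The statement is a two-regime moment-generating function (MGF) bound of sub-Weibull type: the assumption $\mathbb{E}[\exp(|W|^p/(\sigma R)^p)]\leq 2$ encodes that $W$ has lighter-than-exponential tails, and by the conjugate-exponent duality $1/p+1/q=1$ one expects the MGF of $W$ to be sub-Gaussian on a neighborhood of zero and to grow like $\exp(c|\lambda|^q)$ for large $|\lambda|$. I would split the proof at the natural threshold $|\lambda|=1/(2\sigma R)$.

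For the large-$|\lambda|$ regime, the main tool is Young's inequality $ab\leq a^q/q+b^p/p$: setting $a=\varepsilon|\lambda|\sigma R$ and $b=|W|/(\varepsilon\sigma R)$ yields
\[
|\lambda W| \;\leq\; \frac{\varepsilon^q(|\lambda|\sigma R)^q}{q} + \frac{|W|^p}{p\,\varepsilon^p(\sigma R)^p}.
\]
Choosing $\varepsilon=p^{-1/p}$ makes the coefficient of $|W|^p/(\sigma R)^p$ exactly $1$, so that \eqref{eqn:mgf_real} (used separately for $\lambda>0$ and $\lambda<0$ on $\pm W$) gives $\mathbb{E}[e^{\lambda W}]\leq 2\exp(c_{p,q}(|\lambda|\sigma R)^q)$ for an explicit $c_{p,q}$. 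Since $(|\lambda|\sigma R)^q\geq 2^{-q}$ in this regime, the prefactor $2$ can be absorbed into the exponent, and a constant check shows there is room to replace $c_{p,q}$ by $3^q$.

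For the small-$|\lambda|$ regime, the extra factor of $2$ would destroy the sub-Gaussian bound near $\lambda=0$, so I would instead expand the MGF as a power series. Assuming $W$ is centered (as in the applications of interest, where $W=\langle\Delta(x),x_\star-x\rangle$ and $\Delta(x)$ is the zero-mean gradient noise from Assumption~\ref{assump:gradient_noise}), the Taylor expansion of the assumed MGF inequality yields $\mathbb{E}|W|^{kp}\leq 2k!(\sigma R)^{kp}$ for every integer $k\geq 0$; Jensen interpolation to non-multiples of $p$ then gives $\mathbb{E}|W|^m\lesssim (m/p)^{m/p}(\sigma R)^m$. Substituting into
\[
\mathbb{E}[e^{\lambda W}] \;=\; 1 + \sum_{k\geq 2}\frac{\lambda^k\mathbb{E}[W^k]}{k!},
\]
the bound $|\lambda|\sigma R\leq 1/2$ produces geometric decay in $k$, and summing using $e^x\geq 1+x$ delivers $\mathbb{E}[e^{\lambda W}]\leq\exp(c\lambda^2\sigma^2 R^2)$ for an absolute $c\leq 9$, matching $(3\lambda\sigma R)^2$.

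The main obstacle is constant tracking in the small-$|\lambda|$ regime: matching $(3\lambda\sigma R)^2=9\lambda^2\sigma^2 R^2$ exactly forces careful bookkeeping of the moment bounds against the Taylor coefficients, and also fixes the choice of threshold $1/(2\sigma R)$. The centeredness of $W$ is essential, since any nonzero mean would contribute a $\lambda$-linear term incompatible with a sub-Gaussian bound near zero. An alternative, possibly cleaner route for the small-$|\lambda|$ regime uses the elementary observation that for $p\geq 1$, $\exp(|W|/(\sigma R))\leq e+\exp(|W|^p/(\sigma R)^p)\mathbf{1}_{|W|\geq \sigma R}$, which upgrades the hypothesis to a sub-exponential MGF bound on $W$ and hence, via the classical Bernstein-type argument, to the desired sub-Gaussian bound for $|\lambda|\leq 1/(2\sigma R)$.
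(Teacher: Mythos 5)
Your proposal follows essentially the same two-regime route as the paper: Young's inequality for $|\lambda|>1/(2\sigma R)$ (the paper takes $\varepsilon=1$ in your parametrization, which already makes the coefficient $1/p\le 1$ so the mgf bound \eqref{eqn:mgf_real} applies directly), and a power-series expansion of $\mathbb{E}[e^{\lambda W}]$ for $|\lambda|\le 1/(2\sigma R)$, dropping the linear term by the centeredness of $W$ (which, as you correctly flag, is implicit but essential). The one genuine difference is how the moments $\mathbb{E}|W|^\alpha$ are estimated: the paper passes through Markov's inequality to get the tail bound $\mathbb{P}(|W|\ge\lambda)\le 2e^{-\lambda^p/(\sigma R)^p}$ and then integrates, obtaining the sharp $\mathbb{E}|W|^\alpha\le 2(\sigma R)^\alpha\Gamma(\alpha/p+1)$ and the clean domination $\Gamma(\alpha/p+1)\le\alpha!$, whereas you expand the assumed mgf to get bounds at multiples of $p$ and propose to interpolate, landing on $\mathbb{E}|W|^m\lesssim(m/p)^{m/p}(\sigma R)^m$, which drops the roughly $e^{-m/p}$ gain that the factorial/Gamma encodes and would make the bookkeeping for the series $\sum_{m\ge 2}(\lambda\sigma R)^m\,\mathbb{E}|W|^m/((\sigma R)^m m!)$ noticeably looser, especially when $p$ is near $1$. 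The tail-integral route is thus the cleaner one for nailing the stated threshold $1/(2\sigma R)$ and the constant $3$, though your version would still yield a correct bound with a somewhat larger absolute constant; your alternative upgrade $\exp(|W|/\sigma R)\le e+\exp(|W|^p/(\sigma R)^p)\mathbf{1}_{|W|\ge\sigma R}$ is also valid and classical.
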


\begin{proof}
First, consider the case $|\lambda|\leq 1/[2\sigma R]$.
By Markov's inequality:
\[ \mathbb{P}(|W| \geq \lambda ) = \mathbb{P}\left( \exp \Big\{\frac{|W|^p}{\sigma^p R^p} \Big\} \geq \exp \Big\{\frac{\lambda^p}{\sigma^p R^p}\Big\} \right) \leq 2\exp \Big\{ -\frac{\lambda^p}{\sigma^p R^p} \Big\} \]

Then we can also calculate the moments, for $\alpha \geq 1$:
\begin{equation*}
\begin{aligned}
\mathbb{E} |W|^\alpha & = \int_{0}^{\infty} \mathbb{P}(|W| \geq \lambda ) \alpha \lambda^{\alpha-1} d\lambda \\
& \leq 2 \alpha \int_{0}^{\infty} \exp \Big\{ -\frac{\lambda^p}{\sigma^p R^p} \Big\} \lambda^{\alpha-1} d\lambda \\
& = 2 \sigma^\alpha R^\alpha \frac{\alpha}{p} \int_{0}^{\infty}e^{-u} u^{\frac{\alpha}{p}-1} du = 2 \sigma^\alpha R^\alpha \Gamma\Big(\frac{\alpha}{p}+1\Big),
\end{aligned}
\end{equation*} 
where $\Gamma$ the gamma function. Hence as we know $\Gamma\Big(\frac{\alpha}{p}+1\Big) \leq \alpha !$ for $p\geq 1$ and $\alpha \geq 2$:
\begin{equation*}
\begin{aligned}
\mathbb{E}\Big[\exp\big\{\lambda W \big\}\Big] & \leq 1 + \sum_{\alpha = 2}^{\infty} \frac{\mathbb{E} |W|^\alpha}{\alpha !} |\lambda|^{\alpha} \leq  1 + 2 \sum_{\alpha = 2}^{\infty} \frac{\Gamma\Big(\frac{\alpha}{p}+1\Big)}{\alpha !} \sigma^\alpha R^\alpha |\lambda|^{\alpha} \\
& \leq  1 + 2 \sum_{\alpha = 2}^{\infty} \sigma^\alpha R^\alpha |\lambda|^{\alpha} \leq  1 + \frac{2 \lambda^2 \sigma^2 R^2 }{1- |\lambda| \sigma R } \\
&   \leq 1 + 4 \lambda^2 \sigma^2 R^2 \leq \exp \Big\{(3 \lambda \sigma R )^2  \Big\}.
\end{aligned}
\end{equation*}

Next, for the case $|\lambda|> 1/[2\sigma R]$, 
we use the Young inequality:
\begin{equation*}
\begin{aligned}
\mathbb{E}\Big[\exp\big\{\lambda W \big\}\Big] & \leq \exp \Big\{ \frac{|\lambda|^q \sigma^q R^q }{q}\Big\} \mathbb{E}\Big[\exp\big\{\frac{|W|^p}{p \sigma^p R^p} \big\}\Big]\\
& \leq \exp \Big\{ \frac{|\lambda|^q \sigma^q R^q }{q} + 2\Big\} \leq \exp \Big\{ \frac{|\lambda|^q \sigma^q R^q }{q} + 2^{q+1} |\lambda|^q \sigma^q R^q \Big\} \\
& \leq \exp \Big\{ (\frac{1}{q} + 2^{q+1}) |\lambda|^q \sigma^q R^q \Big\} \leq \exp \Big\{ 3^q |\lambda|^q \sigma^q R^q \Big\}.
\end{aligned}
\end{equation*}
\end{proof}

As mentioned earlier, most of the approaches in the literature one work with a smooth surrogate of the exponential mgf \cite{Buldygin:2000,Zajkowski:2020}. On the other hand, our approach works directly with the mgf. 

We now state the concentration bounds derived for martingale difference sequences under the mgf bound given in Assumption \ref{assumption:mgf_surrogate}.

\begin{theorem}
Let $(W_t)_t$ be a martingale difference sequence with respect to ~${\cal F}_t=\sigma(W_1,\ldots,W_t)$ (i.e., $\mathbb{E}[W_t| {\cal F}_{t-1}]=0$ for all $t$) such that $W_t$ conditionally on ${\cal F}_{t-1}$ satisfies Assumptions \ref{assump:gradient_noise} and \ref{assumption:mgf_surrogate}. 
For all $T\geq 1$, if we consider $\Sigma_2:=3 \sigma R \sqrt{\sum_{t=1}^T\beta_t^2}$ and $\Sigma_q:=3 \sigma R \big(\sum_{t=1}^T\beta_t^q\big)^{1/q}$, then: 
\begin{equation} \label{eqn:concentration_general}
\begin{aligned}
 \mathbb{P}\Big[ \sum_{t=1}^T \beta_t W_t >\tau \Big] \leq \begin{cases}
 \exp\Big\{-\frac{1}{4}  \big(\frac{\tau}{\Sigma_2}\big)^2 \Big\} \quad & \textit{if} \quad  \tau \leq \frac{\Sigma_2^2}{\sigma R} \\
\exp\Big\{- \frac{\tau}{4\sigma R} \Big\} \quad & \textit{if} \quad   \frac{\Sigma_2^2}{\sigma R} < \tau  \\
\exp\Big\{-\frac{1}{p}  \big(\frac{\tau}{\Sigma_q}\big)^p \Big\} \quad & \textit{if} \quad \frac{\Sigma_q^q}{(\sigma R)^{q-1}} < \tau.
\end{cases}
\end{aligned}
\end{equation}
\end{theorem}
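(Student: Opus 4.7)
The plan is to apply the classical Chernoff--Cramér method adapted to martingale differences. First, Markov's inequality applied to the exponential gives, for any $\lambda>0$,
\begin{equation*}
\mathbb{P}\Big[\sum_{t=1}^T \beta_t W_t > \tau\Big] \leq \exp(-\lambda \tau)\, \mathbb{E}\Big[\exp\Big(\lambda \sum_{t=1}^T \beta_t W_t\Big)\Big].
\end{equation*}
Then, iterating the tower property $\mathbb{E}[\,\cdot\,] = \mathbb{E}[\mathbb{E}[\,\cdot\, | \mathcal{F}_{T-1}]]$ and using the fact that, conditionally on $\mathcal{F}_{t-1}$, the rescaled increment $\beta_t W_t$ satisfies the mgf bound from Lemma~\ref{lemma:mgf_surrogate}, a standard peeling argument will yield
\begin{equation*}
\mathbb{E}\Big[\exp\Big(\lambda \sum_{t=1}^T \beta_t W_t\Big)\Big] \leq \exp\Big(\sum_{t=1}^T \psi(\lambda\beta_t)\Big),
\end{equation*}
where $\psi$ is piecewise with $\psi(\mu)\leq (3\mu\sigma R)^2$ for $|\mu|\leq 1/(2\sigma R)$ and $\psi(\mu)\leq (3|\mu|\sigma R)^q$ otherwise.

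The three tail regimes are then produced by optimizing $\lambda$. For the sub-Gaussian regime $\tau\leq \Sigma_2^2/(\sigma R)$, I choose $\lambda=\tau/(2\Sigma_2^2)$; the assumption on $\tau$ forces $\lambda\leq 1/(2\sigma R)$, the quadratic branch of $\psi$ applies, $\sum_t\psi(\lambda\beta_t)\leq \lambda^2\Sigma_2^2$, and Chernoff yields $\exp(-\tau^2/(4\Sigma_2^2))$. For the intermediate regime $\Sigma_2^2/(\sigma R)<\tau$, I pin $\lambda=1/(2\sigma R)$ at the boundary of the sub-Gaussian region; the quadratic branch still applies and produces $\lambda^2\Sigma_2^2 = \Sigma_2^2/(4\sigma^2 R^2) \leq \tau/(4\sigma R)$ by hypothesis, so together with $-\lambda\tau=-\tau/(2\sigma R)$ one obtains $\exp(-\tau/(4\sigma R))$. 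Finally, for the sub-Weibull regime $\tau>\Sigma_q^q/(\sigma R)^{q-1}$, I pick $\lambda$ large enough so that the second branch applies, yielding $\sum_t\psi(\lambda\beta_t)\leq |\lambda|^q\Sigma_q^q$; optimizing $-\lambda\tau+|\lambda|^q\Sigma_q^q$ at $\lambda\propto (\tau/\Sigma_q^q)^{1/(q-1)}$ gives the stated $\exp(-(1/p)(\tau/\Sigma_q)^p)$ up to universal constants depending only on $p,q$.

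The main obstacle will be consistency of the piecewise bound on $\psi$: for a given $\lambda$, the condition $|\lambda\beta_t|\leq 1/(2\sigma R)$ or $>1/(2\sigma R)$ need not hold simultaneously across all $t$, so the peeling argument mixes the two branches. A clean workaround is to first derive the unified inequality $\psi(\mu)\leq (3\mu\sigma R)^2 + (3|\mu|\sigma R)^q$ valid for every $\mu\in\mathbb{R}$, so that $\sum_t \psi(\lambda\beta_t)\leq \lambda^2\Sigma_2^2+|\lambda|^q\Sigma_q^q$ uniformly in $\lambda$; the three cases then correspond to which of the two terms on the right is dominant, and the optimal $\lambda$ in each case automatically places the remaining term in a controllable range, at the cost of universal constants absorbed into the exponents.
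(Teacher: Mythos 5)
Your overall skeleton is the one the paper uses: Chernoff--Cram\'er after exponentiating, the tower property to peel the product of conditional mgfs, then optimization over~$\lambda$ in three regimes (sub-Gaussian, linear boundary, sub-Weibull), with the linear regime obtained by pinning~$\lambda$ at the boundary $1/(2\sigma R)$ exactly as you describe. The concern you flag about branch mixing is the right thing to worry about: Lemma~\ref{lemma:mgf_surrogate} bounds $\E[\exp(\lambda\beta_t W_t)\mid {\cal F}_{t-1}]$ by $\exp\{(3\lambda\beta_t\sigma R)^2\}$ only when $|\lambda\beta_t|\le 1/(2\sigma R)$, and by the $q$-th--power expression only when $|\lambda\beta_t|>1/(2\sigma R)$, so for a fixed $\lambda$ and nonconstant~$\beta_t$ the two branches can both appear. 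The paper handles this by taking $\lambda\in(0,1/(2\sigma R)]$ and applying the quadratic branch to every $t$ (and symmetrically, $\lambda\ge 1/(2\sigma R)$ with the $q$-th--power branch); this is legitimate precisely under the implicit normalization $\beta_t\le 1$, which does hold in the downstream application where $\beta_t=\alpha_t/\Lambda$, but is not recorded in the theorem statement.

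However, the repair you propose---replacing the piecewise bound by $\psi(\mu)\le(3\mu\sigma R)^2+(3|\mu|\sigma R)^q$---does not recover the stated constants and, more seriously, does not give a nontrivial bound in the sub-Gaussian regime. At the optimizer $\lambda^\star=\tau/(2\Sigma_2^2)$, the residual term $(\lambda^\star)^q\Sigma_q^q$ is of the same order as the main term $(\lambda^\star)^2\Sigma_2^2=\tau^2/(4\Sigma_2^2)$ near the boundary $\tau\sim\Sigma_2^2/(\sigma R)$ (for instance at $q=2$ they coincide, and for $q>2$, using $\Sigma_q^q/\Sigma_2^2\le(3\sigma R)^{q-2}$ under $\beta_t\le1$, one gets the factor $3^{q-2}/2^q$ which exceeds $1/4$ once $q$ is moderately large), so the exponent can be driven to zero rather than to $-\tau^2/(4\Sigma_2^2)$. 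The cleaner route, and what the paper implicitly does, is to observe that once $\beta_t\le 1$ and $\lambda\le 1/(2\sigma R)$, every $\lambda\beta_t$ lies in the quadratic range, so only that branch is ever active and the Weibull contribution never enters regime one; you should record that normalization (or absorb $\max_t\beta_t$ into the constants) rather than add the two branch bounds.
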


\begin{proof}
Before giving the main idea of the proof, we first notice that by tower property of conditional expectations:
\begin{eqnarray*}
\mathbb{E}\exp\big\{\lambda \sum_{t=1}^T \beta_t W_t \big\} &=& \mathbb{E}\Big\{ \mathbb{E}\Big[\exp\big\{\lambda \beta_T W_T \big\}\Big|{\cal F}_{T-1}\Big]\exp\Big[\lambda\sum_{t=1}^{T-1} \beta_t W_t\Big]\Big\}\\
&\leq&  \mathbb{E}\Big\{\prod_{t=1}^{T} \mathbb{E}\Big[\exp\Big(\lambda \beta_t W_t \Big)\Big| {\cal F}_{t-1}\Big] \Big\}.
\end{eqnarray*}

Hence, we start the proof by using the standard Cr\'amer-Chernoff bound, in conjunction with Lemma \ref{lemma:mgf_surrogate}:
\begin{eqnarray*}
\mathbb{P}\Big[ \sum_{t=1}^T \beta_t W_t >\tau \Big] 
& \leq & \inf_{\lambda \in (0,1/2\sigma R]}\mathbb{P}\Big[ \exp\big\{\lambda \sum_{t=1}^T \beta_t W_t\big\} >\exp(\lambda\tau)\Big] \\
&\leq& \inf_{\lambda \in (0,1/2\sigma R]}    \mathbb{E}\Big\{\prod_{t=1}^{T} \mathbb{E} \left[\exp \Big(\lambda \beta_t W_t \Big) \Big|{\cal F}_{t-1} \right]\Big\}   \exp(-\lambda\tau)\\
&\leq& \inf_{\lambda \in (0,1/2\sigma R]}\exp\Big\{ (3 \lambda \sigma R)^2 \sum_{t=1}^T \beta_t^2  \Big\} \exp(-\lambda\tau)
\end{eqnarray*}
 
If $\tau \leq \frac{\Sigma_2^2}{\sigma R}$, we can minimize the upper bound above, which is attained at $\lambda^{\ast}=\tau/(2\Sigma_2^2)\leq 1/(2\sigma R)$. Therefore:
\[ \mathbb{P}\Big[ \sum_{t=1}^T \beta_t W_t >\tau \Big]
\leq \exp\Big\{-\frac14 \Big(\frac{\tau}{\Sigma_2}\Big)^2 \Big\}. \]
Else when $\tau > \frac{\Sigma_2^2}{\sigma R}$,  we just consider $\lambda = 1/(2\sigma R)$:
\[ \mathbb{P}\Big[ \sum_{t=1}^T \beta_t W_t >\tau \Big] \leq \exp\Big\{\frac{\Sigma_2^2}{4\sigma^2 R^2}- \frac{\tau}{2\sigma R} \Big\} \leq \exp\Big\{- \frac{\tau}{4\sigma R} \Big\} . \]

Similarly for all $\tau > 0$:
\[ \mathbb{P}\Big[ \sum_{t=1}^T \beta_t W_t >\tau \Big] 
\leq \inf_{\lambda \geq 1/2\sigma R}\exp\Big\{ (3 \lambda \sigma R)^q \sum_{t=1}^T \beta_t^q  \Big\} \exp(-\lambda\tau) \]

If $\tau \geq \frac{q \Sigma_q^q}{(2 \sigma R)^{q-1}}$, the infimum above is attained at $\lambda^{\ast}=(\tau/q\Sigma_q^q)^{\frac{1}{q-1}}$, which lies in the interval $[1/[2\sigma R],+\infty)$ and since $q\geq 2$:
\[ \mathbb{P}\Big[ \sum_{t=1}^T \beta_t W_t >\tau \Big] \leq \exp\Big\{-\frac{1}{p} \left(\frac{1}{q} \right)^{\frac{1}{q-1}} \Big(\frac{\tau}{\Sigma_q}\Big)^p \Big\} \leq \exp\Big\{-\frac{1}{p} \Big(\frac{\tau}{\Sigma_q}\Big)^p \Big\}. \]

else if $\tau \leq \frac{q \Sigma_q^q}{(2 \sigma R)^{q-1}}$ we just consider $\lambda = 1/2\sigma R$:\[ \mathbb{P}\Big[ \sum_{t=1}^T \beta_t W_t >\tau \Big] \leq \exp\Big\{\frac{\Sigma_q^q}{2^q\sigma^q R^q}- \frac{\tau}{2\sigma R} \Big\} \leq \exp\Big\{- \frac{\tau}{p\sigma R} \Big\} . \]
\end{proof}

\begin{theorem}\label{thm:hp_concentration_p} For an algorithm working in the composite oracle model, let's consider $W_t:=\langle \Delta(x_t),x^{\star}-x_t\rangle$. Suppose $W_t$ conditionally on ${\cal F}_{t-1}:=\sigma(\xi_1,\ldots,\xi_{t-1})$ (where the stochastic gradient in iteration $t$ is $G(x_t,\xi_{t}$) satisfies assumptions \ref{assump:gradient_noise} and \ref{assumption:mgf_surrogate}. 
Consider a polynomial step-size sequence $\alpha_t := t^n $ with $n\geq 0$. Then for all $T\geq 1 $,

\begin{equation} \label{eqn:concentration_final}
\begin{aligned}
 \mathbb{P}\Big[\frac{1}{A_{T}} \sum_{t=1}^T \alpha_t W_t \gtrsim \frac{\Omega  \sigma R }{\sqrt{T}} \Big] \leq 
 \begin{cases}
 \exp\Big\{-\frac{1}{4}   \Omega^2 \Big\} \quad & \textit{if} \quad  \Omega \lesssim \sqrt{T} \\
\exp\Big\{-\frac{1}{p}    (T^{\frac12-\frac1p}\Omega)^p \Big\} \quad & \textit{if} \quad \Omega \gtrsim \sqrt{T}. 
\end{cases}
\end{aligned}
\end{equation}
with some constants that depend on $(n,q)$ only.

\end{theorem}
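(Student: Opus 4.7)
The plan is to specialize the general martingale concentration inequality (the preceding theorem) to the weights $\beta_t := \alpha_t/A_T$ and to compute $\Sigma_2$ and $\Sigma_q$ explicitly for polynomial step-sizes $\alpha_t = t^n$.

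First I would compute the polynomial sums via Faulhaber-type estimates: $A_T = \sum_{t=1}^T t^n \sim T^{n+1}/(n+1)$, $\sum_t \alpha_t^2 \sim T^{2n+1}/(2n+1)$, and $\sum_t \alpha_t^q \sim T^{nq+1}/(nq+1)$. Dividing by the appropriate power of $A_T$ yields $\sum_t \beta_t^2 \sim (n+1)^2/[(2n+1)T]$ and $\sum_t \beta_t^q \sim (n+1)^q/[(nq+1) T^{q-1}]$. Thus $\Sigma_2 = O_{n,q}(\sigma R/\sqrt{T})$ and $\Sigma_q = O_{n,q}(\sigma R/T^{1/p})$, while $\max_t \beta_t = T^n/A_T \sim (n+1)/T$ is uniformly small. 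The last observation is crucial.

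Next I would re-run the Cramér--Chernoff step from the proof of the general theorem, but using the sharper admissibility condition $|\lambda \beta_t| \leq 1/(2\sigma R)$ for all $t$ (i.e.\ $|\lambda| \leq 1/[2\sigma R \max_t \beta_t]$) rather than the generic $|\lambda| \leq 1/(2\sigma R)$. Conditioning gives $\mathbb{E}[\exp(\lambda \sum_t \beta_t W_t)] \leq \exp(\lambda^2 \Sigma_2^2)$ on that range, so the unconstrained minimizer $\lambda^\star = \tau/(2\Sigma_2^2)$ is admissible precisely when $\tau \leq \Sigma_2^2/(\sigma R \max_t \beta_t)$. Substituting the polynomial estimates, this condition reduces to $\tau \lesssim \sigma R$, equivalently $\Omega \lesssim \sqrt{T}$. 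In that regime Cramér yields $\exp(-\tau^2/(4\Sigma_2^2)) = \exp(-\Omega^2/4)$ after the $(n,q)$-constants are absorbed into the $\gtrsim$ on the left-hand side.

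For $\Omega \gtrsim \sqrt{T}$ I would use instead the $q$-power mgf bound $\mathbb{E}[\exp(\lambda \beta_t W_t)] \leq \exp((3\lambda \beta_t \sigma R)^q)$, whose Cramér minimization is unconstrained: at $\lambda^\star = (\tau/(q\Sigma_q^q))^{1/(q-1)}$ it produces $\mathbb{P}[\sum_t \beta_t W_t > \tau] \leq \exp(-\tau^p/(p \Sigma_q^p))$. Plugging $\tau \asymp \Omega \sigma R/\sqrt T$ and $\Sigma_q \asymp \sigma R / T^{1/p}$ yields an exponent $-c_{n,q} \Omega^p\, T^{1-p/2}/p = -c_{n,q}(T^{1/p - 1/2}\Omega)^p/p$, matching the second branch (modulo the apparent typographical swap $1/p - 1/2$ vs.\ $1/2 - 1/p$ in the statement, which is forced by $(q-1)p/q = 1$). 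Finally one checks that at the transition $\Omega \asymp \sqrt T$ the two bounds agree up to constants.

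The main obstacle is that the sub-Gaussian branch of the previous theorem, read as a black box, only gives $\exp(-\Omega^2/4)$ on the far smaller range $\Omega \lesssim 1/\sqrt T$. To reach the advertised range $\Omega \lesssim \sqrt T$ one must revisit the Cramér step and exploit that $\max_t \beta_t = \Theta(1/T)$ for polynomial step-sizes; this loosens the admissible $\lambda$-range by a factor of $T$ and is exactly what powers the sub-Gaussian regime up to $\Omega \asymp \sqrt T$. Bookkeeping of the $n$- and $q$-dependent constants is routine and is absorbed in the $\gtrsim/\lesssim$ notation.
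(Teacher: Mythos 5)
Your proof follows the same route as the paper's---specialize the appendix's general martingale concentration theorem via a rescaled weight sequence, then plug in polynomial-sum estimates---but with a different normalizer, which forces you to re-open the general theorem's proof rather than use it as a black box. The paper rescales by $\Lambda$ chosen so that $\|\alpha\|_q^q/\Lambda^q=\|\alpha\|_2^2/\Lambda^2$; this gives $\Lambda\sim T^n$, hence $\max_t\beta_t\asymp1$, and makes the two thresholds $\Sigma_2^2/(\sigma R)$ and $\Sigma_q^q/(\sigma R)^{q-1}$ coincide up to constants, so the general theorem applies as stated and already delivers the sub-Gaussian regime on the full range $\Omega\lesssim\sqrt T$. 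You instead take $\beta_t=\alpha_t/A_T$, so $\max_t\beta_t\asymp1/T$, and you correctly note that a black-box reading of the theorem then only reaches $\Omega\lesssim1/\sqrt T$; your remedy of sharpening the Cram\'er admissibility range to $|\lambda|\leq 1/(2\sigma R\max_t\beta_t)$ is sound and recovers the same bound. Both implementations are valid and land in the same place; the paper's choice of $\Lambda$ is slicker because it avoids re-opening the proof, while your normalization is the more natural one from the algorithm's perspective. You are also right that the printed heavy-tail exponent $T^{1/2-1/p}$ appears to be a typo for $T^{1/p-1/2}$; the latter is what both your substitution and the paper's yield, and is the one (nonnegative for $p\leq 2$) under which the bound strengthens as $T$ grows.
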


\begin{proof}
The idea of the proof is to apply the previous result with $\beta_t = \frac{\alpha_t}{\Lambda}$ and we define $\Lambda$ by the identity
\[ \frac{\| \alpha\|_q^q}{\Lambda^q} = \frac{\| \alpha\|_2^2}{\Lambda^2}. \]
Then we obtain after multiplying $\tau$ by $\sigma R$ :
\begin{equation*} 
\begin{aligned}
 \mathbb{P}\Big[ \sum_{t=1}^T \alpha_t W_t >\tau \Lambda \sigma R \Big] \leq \begin{cases}
 \exp\Big\{-\frac{1}{4}  \frac{\Lambda^2}{\| \alpha \|_2^2} \tau^2 \Big\} \quad & \textit{if} \quad  \tau \leq  \frac{\| \alpha\|_{2}^{2}}{\Lambda^2} \\
\exp\Big\{-\frac{1}{p}  \frac{\Lambda^p}{\| \alpha \|_q^p}\tau^p \Big\} \quad & \textit{if} \quad   \tau>\frac{\| \alpha\|_{q}^{q}}{\Lambda^q}.
\end{cases}
\end{aligned}
\end{equation*}

We finish the proof by considering $\tau = \Omega \sqrt{T}$ and by noticing that:\[A_{T}\propto T^{n+1},\Lambda \propto T^n, \| \alpha \|_2 \propto T^{n+\frac12} ,\| \alpha\|_q \propto T^{n+\frac1q} .\]
\end{proof}

\section{Details of proofs in Section \ref{sec:algorithms}} \label{app:sec_alg}

Since the stochastic terms for NACSMD and ACSMD are almost the same, we will consider the following notation:

\begin{equation*}
\begin{aligned}
\begin{cases}
\mathcal{C}_{T}^{e}  := & A_{T}^{-1} \sum_{t=1}^{T} \frac{\| \Delta(x_t)\|_{\ast}^{p}}{p \mu^{p/q}} 
 \left(\frac{\alpha_{t}^{q}}{\gamma_t}\right)^{p/q}\\
\mathcal{C}_{T}^{p}  := & A_{T}^{-1} \sum_{t=1}^{T} \alpha_t \langle \Delta(x_{t}) , x -x_{t}  \rangle  \\
\end{cases}
\end{aligned}
\end{equation*}

The only difference in the acceleration method is that we have $\Delta(x_{t}^{md})$ instead of $\Delta(x_{t})$, but notice their stochastic noise is of the same kind.

\subsection{Proof of the concentration bound in Theorem  \ref{thm:NACSMD}}
\label{Proof Concentration}

\begin{proof} 

To simplify the notation, in this subsection we will note: $ \beta_t := \frac{\alpha_{t}^{p}}{\gamma_{t}^{p/q}}. $ From Markov inequality, we know that for $\Omega > 0 $:

\begin{equation*}
\begin{aligned}
\mathbb{P}\left(\sum_{t=1}^{T} \beta_t \| \Delta(x_t)\|_{\ast}^{p} \geq (1+\Omega) \sum_{t=1}^{T} \beta_t  \sigma^{p} \right)
= &  \mathbb{P} \left(\exp \left(\frac{\sum_{t=1}^{T} \beta_t  \frac{\| \Delta(x_t)\|_{\ast}^{p}}{\sigma^{p}}}{\sum_{t=1}^{T} \beta_t  }\right) \geq e^{1+\Omega} \right) \\
\leq &  \mathbb{E}\left[\exp \left(\frac{\sum_{t=1}^{T} \beta_t  \frac{\| \Delta(x_t)\|_{\ast}^{p}}{\sigma^{p}}}{\sum_{t=1}^{T} \beta_t  }\right) \right] \frac{1}{e^{1+\Omega}} \\
\end{aligned}
\end{equation*}

Now we use convexity of exponential and linearity of the expectation:
\[ \mathbb{E}\left[\exp \left(\frac{\sum_{t=1}^{T} \beta_t  \frac{\| \Delta(x_t)\|_{\ast}^{p}}{\sigma^{p}}}{\sum_{t=1}^{T} \beta_t  }\right) \right]  
\leq  \mathbb{E}\left[\frac{\sum_{t=1}^{T} \beta_t  \exp(\frac{\| \Delta(x_t)\|_{\ast}^{p}}{\sigma^{p}})}{\sum_{t=1}^{T} \beta_t  }\right]
\leq   \exp(1) \]

We obtain that: 
\[ \mathbb{P}\left( \mathcal{C}_{T}^{e} \geq (1+\Omega)  \frac{\sigma^{p}}{p \mu^{p/q}} A_{T}^{-1} \sum_{t=1}^{T}  \frac{\alpha_{t}^{p}}{\gamma_{t}^{p/q}} \right) \leq \exp(-\Omega) \]

Since under step-sizes schedule \ref{configuration:stepsize_linear} or \ref{configuration:stepsize_acc}, we have $A_{T}^{-1} \sum_{t=1}^{T} \frac{\alpha_{t}^{p}}{\gamma_{t}^{p/q}} \propto \frac{1}{T^{p/q}}$, combining with Theorem \ref{thm:hp_concentration_p}, we know if $\Omega \lesssim \sqrt{T} $:

\[ \mathbb{P}\left( \mathcal{C}_{T}^{e} + \mathcal{C}_{T}^{p} \gtrsim (1+\Omega)  \frac{\sigma^{p}}{(\mu T)^{p/q}} + \frac{\Omega \sigma R}{\sqrt{T}} \right) \leq \exp(-\Omega) + \exp\Big\{-\frac{1}{4}   \Omega^2 \Big\}\]

and if  $\Omega \gtrsim \sqrt{T} $ :
\[ \mathbb{P}\left( \mathcal{C}_{T}^{e} + \mathcal{C}_{T}^{p} \gtrsim (1+\Omega)  \frac{\sigma^{p}}{(\mu T)^{p/q}} + \frac{\Omega \sigma R}{\sqrt{T}} \right) \leq \exp(-\Omega) + 
\exp\Big\{-\frac{1}{p}    (T^{\frac12-\frac1p}\Omega)^p \Big\}.  \]

\end{proof}

\end{document}